\documentclass[journal]{IEEEtran}

\usepackage{geometry}

\geometry{
    top=1in, 
    bottom=1in, 
    left=1in, 
    right=1in, 
}

\usepackage[utf8]{inputenc} 
\usepackage[T1]{fontenc}    
\usepackage{hyperref}       
\usepackage{url}            
\usepackage{booktabs}       
\usepackage{amsfonts}       
\usepackage{nicefrac}       
\usepackage{microtype}      
\usepackage{lipsum}
\usepackage{fancyhdr} 
\usepackage{graphicx}       
\usepackage[cmex10]{amsmath} 
\usepackage[T1]{fontenc}
\usepackage{ifthen}
\usepackage{cite}
\usepackage{color}
\usepackage[dvipsnames]{xcolor}
\usepackage{pifont} 
\usepackage{latexsym}
\usepackage{amssymb,mathtools,amsthm}
\usepackage{amsfonts}
\usepackage{dsfont}
\usepackage{amsopn}
\usepackage{array}
\def\epsilon{\varepsilon}
\def\eps{\varepsilon}
\usepackage{algorithm}
\usepackage{algpseudocode}
\usepackage{graphicx}

\newcommand{\greencheck}{{\color{black}\ding{51}}} 
\newcommand{\redx}{{\color{black}\ding{55}}} 



\interdisplaylinepenalty=2500 



\newtheorem{theorem}{Theorem}

\newtheorem{proposition}[theorem]{Proposition}
\newtheorem{definition}{Definition}



\allowdisplaybreaks

\begin{document}
\onecolumn
\title{Differentially Private Fair Binary Classifications
%
}
\author{
  Hrad Ghoukasian, Shahab Asoodeh \\
  Department of Computing and Software, McMaster University \\
  \texttt{ \{ghoukash, asoodeh\}@mcmaster.ca} \\
}

\maketitle

\begin{abstract}
In this work, we investigate binary classification under the constraints of both differential privacy and fairness.
    We first propose an algorithm based on the decoupling technique for learning a classifier with only fairness guarantee. This algorithm takes in classifiers trained on different demographic groups and generates a single classifier satisfying statistical parity. We then refine this algorithm to incorporate differential privacy. The performance of the final algorithm is rigorously examined in terms of privacy, fairness, and utility guarantees.
   Empirical evaluations conducted on the Adult and Credit Card datasets illustrate that our algorithm outperforms the state-of-the-art in terms of fairness guarantees, while maintaining the same level of privacy and utility.      
\end{abstract}


\section{Introduction}
Machine learning algorithms are increasingly utilized in high-stake decision-making processes, highlighting the need to thoroughly assess their trustworthiness. Two main topics in the context of trustworthy machine learning are privacy and fairness.  Machine learning models are required to, on the one hand, protect the privacy of individuals in the training datasets, and on the other hand, avoid leading to discrimination against any demographic subgroups. Differential privacy (DP) \cite{dwork2006calibrating,dwork2006differential} is the de-facto standard for privacy-preserving machine learning algorithms deployed in practice (e.g.,~\cite{erlingsson2014rappor,Apple_Privacy,Facebook020GuidelinesFI,rogers2020linkedin}). Informally speaking, a randomized algorithm is differentially private if its output distribution does not significantly change by changing an entry in the dataset (yielding a pair of neighboring datasets). 
However, unlike privacy, there is no universal definition of fairness. Therefore, how fairness is defined or algorithmically enforced depends on the particular context of the problem. A widely recognized concept of fairness is statistical parity \cite{feldman2015certifying}, also known as demographic parity. This definition implies that predictions of a classification model should be independent of the \textit{sensitive} attributes, e.g., gender or race. While privacy and fairness have been extensively studied separately in the literature, the intersection of them has recently gained attention.

In the literature on fair classification algorithms, incorporating sensitive attributes is known as ``fairness through awareness'' \cite{dwork2012fairness}. 
While the choice to use or not use sensitive attributes is specific to each problem, \cite{dwork2018decoupled} demonstrated that in scenarios where it is legally and ethically tenable to use these attributes, training separate classifiers (decoupled classifiers) for each group can outperform the performance of a single optimal classifier in terms of both accuracy and fairness (without privacy included). \cite{wang2021split} also showed that employing decoupled classifiers does not negatively impact any group in terms of average performance metrics in the information-theoretic regime where the underlying data distribution is known. 
%
%
%
However, a significant challenge emerges when considering privacy. It has been empirically demonstrated that differentially private mechanisms can exacerbate unfairness \cite{bagdasaryan2019differential,pujol2020fair}. This necessitates the implementation of specific strategies to mitigate the negative effects of DP on fairness guarantees. 
Combining the above idea of decoupled classification with differential privacy, \cite{stratification} proposed a ``stratification'' technique to reduce the disparity in differentially private mechanisms. This technique involves applying a DP mechanism separately to different subgroups and then recombining the respective results to derive overall statistics for the entire dataset.
It was demonstrated that a naive stratification approach can produce highly accurate estimates for population-level statistics without requiring an extra privacy budget. Building on this foundation, our pipeline is structured to first apply DP and subsequently address fairness using a post-processing step. This order of application is crucial because DP methods introduce random noise into the process, which can alter the predictions of our model. If the model met fairness standards before we applied DP, the introduction of this noise could lead to a situation where those fairness standards are no longer met.

One line of work explores the relationship between DP and different notions of fairness, examining their compatibility and determining how fairness deteriorates DP guarantees and vice versa\cite{bagdasaryan2019differential,cummings2019compatibility,mangold2023differential,changshokri2021privacy,pujol2020fair,farrand2020neither,agarwal_2020}. Another research direction at the intersection of privacy and fairness centers around developing classification algorithms that simultaneously guarantee DP and a specific notion of fairness. Despite these remarkable recent advances, most such works suffer from inherent limitations. For instance, some of these methods are tailored only to certain types of classification models, such as logistic regression, and are not universally applicable to all desired models\cite{xu2019achieving,jagielski2019differentially,ding2020differentially}. Other studies focus exclusively on providing privacy with respect to sensitive attributes of individuals in the dataset, neglecting the privacy of other features. More specifically, some methods ensure that an individual's sensitive attributes remain confidential, but they do not provide the same guarantees for non-sensitive attributes, leaving them potentially vulnerable to leakage \cite{jagielski2019differentially,mozannar2020fair,tran2021b,tran2022sf}. For instance,  \cite{mozannar2020fair} introduced two effective differentially private fair classification algorithms which provide privacy guarantees only with respect to the sensitive attributes. This was achieved through the use of randomized response for sensitive attributes. They used randomized response mechanism (for achieving the \textit{local} version of DP), and thus their method does not seem to naturally adapt to 
provide privacy guarantees for all features, especially when dealing with continuous and potentially high-dimensional non-sensitive attributes. Lastly, although most of these methods have shown practical effectiveness, they often lack theoretical guarantees regarding utility and the extent of fairness violation \cite{xu2019achieving,ding2020differentially,xu2021removing,tran2021a,tran2021b,tran2022sf,esipova2023disparate,lowy2023stochastic,yaghini2023learning}. A summary of the characteristics of these methods can be found in Table~\ref{table:features}.\footnote{
In this work, we allow access to the sensitive attributes at test time, similar to the approach in \cite{jagielski2019differentially}, \cite{mozannar2020fair}, and \cite{esipova2023disparate}.}

Our objective is to develop a binary classification algorithm with provable DP and fairness guarantees that addresses the limitations highlighted in Table \ref{table:features}. Inspired by the success of decoupled classifiers and the effectiveness of stratification in reducing disparate impact in differentially private mechanisms (as noted in \cite{wang2021split,dwork2018decoupled, stratification}), our approach begins with separate classifiers for each sub-population. These classifiers then go into a post-processing step to generate a single classifier.
Following the method used in \cite{jiang2020wasserstein}, our goal is to apply a post-processing technique that achieves statistical parity, ensuring that only minimal changes are made to the predictions of the original classifiers. We begin by slightly modifying the approach described in \cite{inherent}, initially developed for non-private settings (Algorithm \ref{algorithm1}). Then, we introduce Algorithm \ref{algorithm2}, a new method for binary classification that is both differentially private and fair. 
This new algorithm comes with theoretical guarantees for utility, fairness,  and differential privacy. More precisely, our contributions are as follows:
\begin{itemize}
    \item We establish a lower bound for the sum of prediction changes across a pair of subgroups under statistical parity (without privacy) and propose an algorithm (based on \cite[Algorithm 1]{inherent}) that attains it (Algorithm~\ref{algorithm1}). Theoretical guarantee of this algorithm is given in  Theorem \ref{algorithm1guarantees}.
    \item
    We then propose a differentially private version (Algorithm \ref{algorithm2}) of Algorithm~\ref{algorithm1} and derive its theoretical guarantees for fairness and utility ---both with high probability and in expectation--- in Theorem~\ref{alg2WHP} and Proposition \ref{alg2Exp}.
    \item 
    Through several experiments on two well-known datasets (namely, Adult and Credit Card), we empirically demonstrate that Algorithm \ref{algorithm2} achieves competitive accuracy when compared to the state-of-the-art DP-Fair classification method, DP-FERMI \cite{lowy2023stochastic}. In particular, we show that for a given level of accuracy and privacy, our algorithm provides a significantly better fairness guarantee across both datasets.\footnote{The experimental code can be accessed at \url{https://github.com/hradghoukasian/dp_fair_binary}.} 
\end{itemize}
\begin{table}[t]
\caption{Benchmark methods in DP-Fair classification}
\label{table:features}
\centering
\renewcommand{\arraystretch}{1.4} 
\begin{tabular}{|
  >{\centering\arraybackslash}m{0.14\columnwidth}|
  >{\centering\arraybackslash}m{0.15\columnwidth}|
  >{\centering\arraybackslash}m{0.15\columnwidth}|
  >{\centering\arraybackslash}m{0.15\columnwidth}|
  >{\centering\arraybackslash}m{0.15\columnwidth}|}
\hline
\textbf{Reference} & \textbf{Applicable to any model} & \textbf{Theoretical guarantee} & \textbf{Privacy w.r.t.\ all features} \\
\hline
\cite{xu2019achieving}
& \redx & \redx & \greencheck \\
\hline
\cite{jagielski2019differentially} (post-proc.)& \greencheck & \greencheck & \redx \\
\hline
\cite{jagielski2019differentially} (in-proc.)& \redx & \greencheck & \greencheck \\
\hline
\cite{ding2020differentially}  & \redx & \redx & \greencheck \\
\hline
\cite{mozannar2020fair}& \greencheck & \greencheck & \redx \\
\hline
\cite{xu2021removing} & \greencheck & \redx & \greencheck \\
\hline
\cite{tran2021a} & \greencheck & \redx & \greencheck \\
\hline
\cite{tran2021b} &  \greencheck &  \redx & \redx \\
\hline
\cite{tran2022sf} & \greencheck & \redx & \redx \\
\hline
\cite{esipova2023disparate} & \greencheck & \redx & \greencheck \\
\hline
\cite{lowy2023stochastic} & \greencheck &  \redx & \greencheck  \\
\hline
\cite{yaghini2023learning} (FairDPSGD)& \greencheck & \redx & \greencheck\\ 
\hline
\cite{yaghini2023learning} (FairPATE)& \greencheck & \redx & \greencheck\\ 
\hline
This Work &  \greencheck & \greencheck & \greencheck \\ 
\hline
\end{tabular}
\end{table}

\section{Notation and Basic Definitions}\label{notation_definition}
We consider a binary classification setting where there is a joint distribution $\mu$ over the triplet $T = (X,A,Y)$, where $X \in \mathcal{X} \subset \mathbb{R}^d$ is the feature vector of non-sensitive attributes, $ A \in \{0, 1\}$ is the sensitive attribute, and $Y \in \{0, 1\}$ is the target output. We use $\mu(Y)$ to denote the marginal distribution of $Y$ from a joint distribution $\mu$ over $Y$ and some other random variables. Denote the marginal distribution of input $X$ by $\mu^X$. For $a \in \{0, 1\}$, we use $\mu_a$ to mean the conditional distribution of $(X,Y)$ conditioned on $A = a$, and $\mu^X_a$ to mean the marginal distribution of input 
$X$ given $A=a$. For any group-aware classifier $h: \mathcal{X}\times\{0,1\}\rightarrow \{0,1\} $ and $a\in\{0,1\}$, we also use $h_a(\cdot)$ to denote the restriction of $h$ on $A = a$, respectively, i.e., $h_a(\cdot) := h(\cdot,a)$. Finally, the probabilistic inequality $X \leq_{\eta} Y$ for a pair of random variables $(X, Y)$  denotes the mathematical statement that $\mathbb P(X>Y)\leq \eta$.

We now formally define differential privacy and statistical parity. 
\begin{definition}
    (Differential privacy \cite{dwork2006calibrating,dwork2006differential}). A randomized mechanism $M:\mathcal{D}\rightarrow\mathcal{R}$ with domain $\mathcal{D}$ and range $\mathcal{R}$ is $(\epsilon,\delta)$-differentially private ($(\epsilon,\delta)$-DP) if for any pair of neighboring datasets $D$ and $D^\prime$ that differ in exactly one record, and for any subsets of outputs $S\subseteq\mathcal{R}$, we have,
    \begin{equation*}
        \mathbb{P}(M(D) \in S) \leq e^\epsilon\mathbb{P}(M(D^\prime) \in S) + \delta.
    \end{equation*}
\end{definition}
\begin{definition} \label{statistical_parity_gap}
(Statistical parity \cite{feldman2015certifying}). Given a joint distribution $\mu$, the statistical parity gap of a binary classifier $\hat{Y}=\hat{h}(X,A)$ with $\hat{h}: \mathcal{X}\times\{0,1\}\rightarrow \{0,1\}$ is 
\begin{equation*}
    \Delta_{SP}(\hat{h}) := |\mu_0(\hat{Y} = 1) - \mu_1(\hat{Y} = 1)|.
\end{equation*}
We say that $\hat Y = \hat{h}(X,A)$ satisfies $\gamma$-statistical parity if
    \begin{equation*}
        \Delta_{SP}(\hat{h}) \leq \gamma.
    \end{equation*}
\end{definition}

Following \cite{inherent, xu2019achieving, yaghini2023learning, ding2020differentially}, we adopt statistical parity as a metric for fairness.

\section{Main Results}\label{technical_results}


In this section, we develop a framework for designing a binary classification algorithm with provable DP and fairness guarantees. We begin in Section \ref{fair_post-proc_no_privacy} by detailing our approach to achieve fairness and the specified utility target in a non-private setting. Here, as outlined in Introduction, our framework involves partitioning the dataset according to sensitive attributes. We first develop a separate classifier for each subgroup. We then implement a post-processing technique that carefully combines those decoupled classifiers in a way to achieve statistical parity, with the objective of minimally perturbing the original classifiers' predictions. Then in Section \ref{fair_post-proc_with_privacy}, we expand this framework to include DP.  

We consider two classifiers, $h_0^*: \mathcal{X}\rightarrow \{0,1\}$ and $h_1^*: \mathcal{X}\rightarrow \{0,1\}$, each trained on subgroups specified by the sensitive attribute $A$ with values 0 and 1, respectively. These classifiers are designed to maximize accuracy without initially considering fairness constraints. In a non-private setting, $h_0^*$ and $h_1^*$ are standard classifiers, whereas in private settings, they are considered classifiers learned by DP guarantees. Our post-processing method
aims to derive a fair classifier $\hat{h}: \mathcal{X} \times \{0,1\} \rightarrow \{0,1\}$ from $h_0^*$ and $h_1^*$. Following the methodology of \cite{jiang2020wasserstein}, our objective is to achieve this goal by minimally perturbing the predictions of the original classifiers. We thus seek fair $\hat h$ that optimizes the utility measured in terms of the sum $\mathbb{P}_{\mu^X_0}(\hat{h}_0(X) \neq h^*_0(X))+\mathbb{P}_{\mu^X_1}(\hat{h}_1(X) \neq h^*_1(X))$.
The reason for adopting the prediction changes over $\mu^{X}_{0}$ and $\mu^{X}_{1}$ (as opposed to the combined distribution $\mu^X$) is as follows: 
when the demographic subgroups are imbalanced in the overall population, relying only on prediction changes across the combined distribution $\mu^X$ can be misleading. This approach may hide significant prediction shifts of the less-represented group, which would be more apparent if we examined the sum of the prediction changes within each subgroup's distribution ($\mu_0^X$ and $\mu_1^X$). In other words, it might be possible to have small overall prediction changes across the combined distribution $\mu^X$ while minority groups are experiencing substantial changes in their predictions \cite{inherent}.  

In Section \ref{fair_post-proc_no_privacy}, we explore the non-private scenario, discussing Algorithm \ref{algorithm1} that ensures statistical parity and optimal utility.  In Section \ref{fair_post-proc_with_privacy}, we extend the algorithm to take privacy into consideration. In particular, we propose Algorithm \ref{algorithm2}, which outputs a classifier that guarantees DP and achieves statistical parity while maintaining minimal changes in the predictions of the original classifiers, both in expectation and with high probability. 
To discuss our utility metric under the constraint of statistical parity, we rely on the following proposition.

\begin{proposition}\label{proposition1}
Let $h_0^*: \mathcal{X}\rightarrow \{0,1\}$ and $h_1^*: \mathcal{X}\rightarrow \{0,1\}$, be arbitrary classifiers trained on subgroups specified by the sensitive attribute $A=0$ and $A=1$, respectively. If a predictor $\hat{Y}=\hat{h}(X,A)$ satisfies $\gamma$-statistical parity, then
\begin{equation*}
    \begin{split}
        \mathbb{P}_{\mu^X_0}(\hat{h}_0(X) \neq h^*_0(X))+\mathbb{P}_{\mu^X_1}(\hat{h}_1(X) \neq h^*_1(X)) \geq  \left| \mathbb{P}_{\mu^X_0}(h^*_0(X) = 1 ) - \mathbb{P}_{\mu^{X}_{1}}(h^*_1(X) = 1 )\right| - \gamma.
    \end{split}
\end{equation*}
\end{proposition}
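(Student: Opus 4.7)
The plan is to reduce the inequality to a triangle-inequality style decomposition, using the elementary fact that changing predictions on a set of probability $p$ can shift the acceptance rate by at most $p$.

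First I would establish the following one-line lemma: for each $a\in\{0,1\}$,
\begin{equation*}
\bigl|\mathbb{P}_{\mu^X_a}(\hat{h}_a(X)=1) - \mathbb{P}_{\mu^X_a}(h^*_a(X)=1)\bigr| \leq \mathbb{P}_{\mu^X_a}(\hat{h}_a(X)\neq h^*_a(X)).
\end{equation*}
This is immediate because on the event $\{\hat{h}_a(X) = h^*_a(X)\}$ the two indicators agree, so the difference of probabilities equals the expectation of the difference of indicators restricted to $\{\hat{h}_a(X) \neq h^*_a(X)\}$, which is bounded in absolute value by the probability of that event.

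Next I would write the target quantity $|\mathbb{P}_{\mu^X_0}(h^*_0(X)=1) - \mathbb{P}_{\mu^X_1}(h^*_1(X)=1)|$ as a telescoping sum by inserting the predictions of $\hat{h}$:
\begin{equation*}
\begin{split}
\mathbb{P}_{\mu^X_0}(h^*_0(X)=1) - \mathbb{P}_{\mu^X_1}(h^*_1(X)=1) =\;& \bigl[\mathbb{P}_{\mu^X_0}(h^*_0(X)=1) - \mathbb{P}_{\mu^X_0}(\hat{h}_0(X)=1)\bigr] \\
&+ \bigl[\mathbb{P}_{\mu^X_0}(\hat{h}_0(X)=1) - \mathbb{P}_{\mu^X_1}(\hat{h}_1(X)=1)\bigr] \\
&+ \bigl[\mathbb{P}_{\mu^X_1}(\hat{h}_1(X)=1) - \mathbb{P}_{\mu^X_1}(h^*_1(X)=1)\bigr].
\end{split}
\end{equation*}
Apply the triangle inequality. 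The outer two bracketed terms are bounded by the prediction-change probabilities on each group via the lemma above. For the middle term, I would observe that $\mathbb{P}_{\mu^X_a}(\hat{h}_a(X)=1) = \mu_a(\hat{Y}=1)$, and then invoke the $\gamma$-statistical parity hypothesis from Definition~\ref{statistical_parity_gap} to bound its absolute value by $\gamma$.

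Combining these three bounds gives
\begin{equation*}
\bigl|\mathbb{P}_{\mu^X_0}(h^*_0(X)=1) - \mathbb{P}_{\mu^X_1}(h^*_1(X)=1)\bigr| \leq \mathbb{P}_{\mu^X_0}(\hat{h}_0(X)\neq h^*_0(X)) + \gamma + \mathbb{P}_{\mu^X_1}(\hat{h}_1(X)\neq h^*_1(X)),
\end{equation*}
and rearranging yields the claim. The argument is essentially calculation-free, so I do not foresee a substantive obstacle; the only subtle point is being careful that $\mathbb{P}_{\mu^X_a}(\hat{h}_a(X)=1)$ coincides with $\mu_a(\hat{Y}=1)$, which holds because $\hat{h}_a(\cdot) = \hat{h}(\cdot,a)$ by definition and $\mu_a$ is the conditional distribution of $(X,Y)$ given $A=a$.
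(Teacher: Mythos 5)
Your proof is correct and follows essentially the same route as the paper's: the paper phrases the key lemma and the three-term decomposition in terms of total variation distances between the binary prediction distributions, which for $\{0,1\}$-valued outputs is exactly your absolute-difference-of-acceptance-rates formulation, and both arguments then conclude via the triangle inequality and the $\gamma$-statistical-parity bound on the middle term. No gap to report.
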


\subsection{Fair Post-Processing Without Privacy}\label{fair_post-proc_no_privacy}
Let  $h_0^*: \mathcal{X}\rightarrow \{0,1\}$ and $h_1^*: \mathcal{X}\rightarrow \{0,1\}$ be decoupled classifiers, that is they are trained on subgroups associated with the sensitive attribute $A=0$ and $A=1$, respectively. $h_0^*$ and $h_1^*$ can be any arbitrary classifiers. In the context of our analysis, they can be considered as the best available classifiers trained on $\mu_0$ and $\mu_1$. 
Our goal is to develop an optimal fair classifier $\hat{h}: \mathcal{X}\times\{0,1\}\rightarrow \{0,1\}$ using $h_0^*$ and $h_1^*$. More precisely, we seek $\hat h$ that satisfies $\Delta_{SP}(\hat{h}) = 0$ while attaining the lower bound in Proposition~\ref{proposition1} for $\gamma = 0$:
\begin{equation*}
\begin{split}
    \mathbb{P}_{\mu^X_0}(\hat{h}_0(X) \neq h^*_0(X))+\mathbb{P}_{\mu^X_1}(\hat{h}_1(X) \neq h^*_1(X)) = \left| \mathbb{P}_{\mu^X_0}(h^*_0(X) = 1 ) - \mathbb{P}_{\mu^{X}_{1}}(h^*_1(X) = 1 )\right|.
\end{split}
\end{equation*}
We present Algorithm \ref{algorithm1} to address this objective. This algorithm, which is a slightly modified version of the algorithm in \cite{inherent}, constructs the fair classifier $h^*_{\text{Fair}}$. The original algorithm in \cite{inherent} builds a fair optimal classifier assuming oracle access to the Bayes optimal classifiers $h_0^*$ and $h_1^*$. Our ultimate goal, to be discussed in the next section with Algorithm~\ref{algorithm2}, is to identify a classifier that is both private and fair. However, the assumption of having access to Bayes optimal classifiers is not practical in DP settings, primarily due to the necessity of introducing noise. 
 Consequently, in Algorithm~\ref{algorithm1}, $h_0^*$ and $h_1^*$ are considered to be any arbitrary classifiers trained on subgroups specified by the sensitive attribute $A=0$ and $A=1$ (not necessarily the Bayes optimal classifiers). 


\begin{algorithm}
\caption{Optimal Fair Binary Classifier}
\label{algorithm1}
\begin{algorithmic}[1] 

\Statex \textbf{Input:} Classifiers $h_0^*$ and $h_1^*$
\Statex \textbf{Output:} A randomized classifier $h^*_{\text{Fair}}:\mathcal{X}\times\{0,1\}\rightarrow\{0,1\}$

\State Compute $\alpha = \mathbb{P}_{\mu^X_0} (h^*_0(X) = 1 )$ and $\beta = \mathbb{P}_{\mu^X_1}(h^*_1(X) = 1 )$. W.L.O.G. assume $\alpha \geq \beta$
\State For $(x,a)$, randomly sample $s$ from the uniform distribution $U(0,1)$
\State Construct $h^*_{\text{Fair}}$ as follows:

\Statex $h_{\text {Fair }}^*(x, a):= \begin{cases}
a=0: & \begin{cases}0 & \text{if } h_0^*(x)=0 \text{ or } h_0^*(x)=1 \text{ and } s>\frac{\alpha+\beta}{2 \alpha} \\
1 & \text{if } h_0^*(x)=1 \text{ and } s \leq \frac{\alpha+\beta}{2 \alpha}\end{cases} \\
a=1: & \begin{cases}0 & \text{if } h_1^*(x)=0 \text{ and } s>\frac{\alpha-\beta}{2(1-\beta)} \\
1 & \text{if } h_1^*(x)=1 \text{ or } h_1^*(x)=0 \text{ and } s \leq \frac{\alpha-\beta}{2(1-\beta)}\end{cases}
\end{cases}$

\Statex \textbf{return} $h^*_{\text{Fair}}$
\end{algorithmic}
\end{algorithm}

\begin{theorem}\label{algorithm1guarantees}
The classifier $h^*_{\text{Fair}}$ constructed by Algorithm~\ref{algorithm1} satisfies perfect statistical parity ($\Delta_{SP}(h^*_{\text{Fair}})=0$) and is optimal in terms of the sum of prediction changes compared to classifiers $h_0^*$ and $h_1^*$, that is
\begin{equation*}
\begin{split}
    \mathbb{P}_{\mu^X_0}({h^*_{\text{Fair}}}_0(X) \neq h^*_0(X))+\mathbb{P}_{\mu^X_1}({h^*_{\text{Fair}}}_{1}(X) \neq h^*_1(X)) = \left| \mathbb{P}_{\mu^X_0}(h^*_0(X) = 1 ) - \mathbb{P}_{\mu^{X}_{1}}(h^*_1(X) = 1 )\right|.
\end{split}  
\end{equation*}
\end{theorem}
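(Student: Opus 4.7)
The plan is to verify both claims by direct computation, exploiting the fact that the randomization in Algorithm~\ref{algorithm1} is independent of $X$ (and of the classifiers $h_0^*, h_1^*$), so all probabilities factor cleanly. The optimality part then follows by comparing the computed sum of prediction changes to the lower bound already established in Proposition~\ref{proposition1} applied with $\gamma = 0$. Throughout I will use the shorthand $\alpha = \mathbb{P}_{\mu^X_0}(h^*_0(X)=1)$ and $\beta = \mathbb{P}_{\mu^X_1}(h^*_1(X)=1)$, assuming WLOG $\alpha \geq \beta$ as the algorithm does.

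For the statistical parity claim, I would compute $\mathbb{P}_{\mu_0}(h^*_{\text{Fair}}(X,0)=1)$ and $\mathbb{P}_{\mu_1}(h^*_{\text{Fair}}(X,1)=1)$ separately. On the group $a=0$ side, $h^*_{\text{Fair}}$ outputs $1$ exactly when $h_0^*(X)=1$ and $s\leq \tfrac{\alpha+\beta}{2\alpha}$; since $s\sim U(0,1)$ is independent of $X$, this probability equals $\alpha \cdot \tfrac{\alpha+\beta}{2\alpha} = \tfrac{\alpha+\beta}{2}$. On the group $a=1$ side, $h^*_{\text{Fair}}$ outputs $1$ when either $h_1^*(X)=1$, or $h_1^*(X)=0$ and $s\leq \tfrac{\alpha-\beta}{2(1-\beta)}$, which gives $\beta + (1-\beta)\cdot \tfrac{\alpha-\beta}{2(1-\beta)} = \tfrac{\alpha+\beta}{2}$. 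The two quantities coincide, so $\Delta_{SP}(h^*_{\text{Fair}})=0$.

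For the utility claim, I would compute each prediction-change probability. For $a=0$, disagreement occurs exactly when $h_0^*(X)=1$ and $s>\tfrac{\alpha+\beta}{2\alpha}$, contributing $\alpha\cdot \tfrac{\alpha-\beta}{2\alpha} = \tfrac{\alpha-\beta}{2}$. For $a=1$, disagreement occurs exactly when $h_1^*(X)=0$ and $s\leq \tfrac{\alpha-\beta}{2(1-\beta)}$, contributing $(1-\beta)\cdot \tfrac{\alpha-\beta}{2(1-\beta)} = \tfrac{\alpha-\beta}{2}$. Summing yields $\alpha-\beta = |\alpha-\beta|$, which is exactly the right-hand side in the theorem. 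Optimality then follows because Proposition~\ref{proposition1} (with $\gamma=0$) guarantees that no $\gamma$-statistically-parity-respecting classifier can achieve a smaller sum.

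There is not really a substantive obstacle here: the algorithm has been engineered so that the thresholds on $s$ are exactly the values that simultaneously equalize the two group-conditional positive rates at $\tfrac{\alpha+\beta}{2}$ and minimize the per-group flip probabilities. The only care required is to (i) use the independence of the uniform draw $s$ from $X$ when factoring probabilities, and (ii) observe that because of the WLOG ordering $\alpha\geq\beta$, the ratios $\tfrac{\alpha+\beta}{2\alpha}$ and $\tfrac{\alpha-\beta}{2(1-\beta)}$ are legitimate probabilities in $[0,1]$, so the construction is well-defined. Both checks are immediate once $\alpha,\beta\in[0,1]$.
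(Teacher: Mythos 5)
Your proposal is correct and follows essentially the same route as the paper's proof: the same direct computation of the two group-conditional positive rates (both equal to $\tfrac{\alpha+\beta}{2}$) and of the two per-group flip probabilities (both equal to $\tfrac{\alpha-\beta}{2}$), with optimality read off from Proposition~\ref{proposition1} at $\gamma=0$. Your added remark that the thresholds $\tfrac{\alpha+\beta}{2\alpha}$ and $\tfrac{\alpha-\beta}{2(1-\beta)}$ lie in $[0,1]$ under the ordering $\alpha\geq\beta$ is a small well-definedness check the paper leaves implicit, but it does not change the argument.
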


\subsection{Fair Post-Processing With Privacy}\label{fair_post-proc_with_privacy}

Next, we delve into a private version of Algorithm~\ref{algorithm1}.  It is worth noting that achieving $(\epsilon,0)$-DP is incompatible with non-trivial guarantees of fairness and utility (see, e.g., \cite{cummings2019compatibility, agarwal_2020} for more details). As a result, our objective is to design an $( \epsilon,\delta)$-DP version of  Algorithm~\ref{algorithm1} with comparable fairness and utility guarantees, provided that $\delta > 0$. 

First, notice that this goal is not feasible just by replacing the input classifiers $h_0^*$ and $h_1^*$ with some differentially private decoupled classifiers
that are learned by applying an existing differentially private learning method ---most notably, differentially private stochastic gradient descent (DP-SGD) \cite{abadi2016deep}--- to each demographic subgroup. 
This approach implicitly assumes privacy guarantees only for non-sensitive features, whereas we aim to guarantee privacy for both sensitive and non-sensitive features. Additionally,  note that Algorithm~\ref{algorithm1} involves some computations over the dataset (e.g., in computing $\alpha$ and $\beta$ in line 1). This therefore necessitates some changes in Algorithm~\ref{algorithm1} for constructing its differentially private version. Let $h^*_{\eps,\delta}:\mathcal{X} \times \{0,1\} \rightarrow \{0,1\}$ be a classifier guaranteeing $(\eps,\delta)$-DP with respect to all features. Then, $h^*_{\eps,\delta,0}:\mathcal{X} \rightarrow \{0,1\} $ and $h^*_{\eps,\delta,1}:\mathcal{X} \rightarrow \{0,1\} $ represent the restrictions of $h^*_{\eps,\delta}$ to $A=0$ and $A=1$, respectively. 
%
%
Thus, our goal can be formulated as follows:  Given classifiers $h^*_{\epsilon,\delta,0}$ and $h^*_{\epsilon,\delta,1}$ that are $(\eps, \delta)$-DP, we wish to generate a fair classifier $h^*_{\eps',\delta',\text{Fair}}$ with the following properties: 
\begin{enumerate}
    \item $h^*_{\eps',\delta',\text{Fair}}$ is $(\eps', \delta')$-DP with some $\eps'$ and $\delta'$ (depending on $\eps$ and $\delta$),
    \item  $h^*_{\eps',\delta',\text{Fair}}$ satisfies $\gamma$-statistical parity with $\gamma$ being a positive value close to zero with high probability and in expectation,
    \item 
    Among all classifiers satisfying the same level of statistical parity gap as $h^*_{\eps',\delta',\text{Fair}}$, the classifier $h^*_{\eps',\delta',\text{Fair}}$ performs comparably with the optimal classifier in terms of utility, both with high probability and in expectation. Optimality here is defined based on minimizing the total number of prediction changes across the distributions $\mu_0^X$ and $\mu_1^X$, relative to the predictions made by $h^*_{\epsilon,\delta,0}$ and $h^*_{\epsilon,\delta,1}$.
    
    
\end{enumerate}

To this goal, we present Algorithm \ref{algorithm2} for learning such $h^*_{\eps',\delta',\text{Fair}}$, assuming that the number of data points belonging to each subgroup in a dataset is publicly known. For any dataset $D$, $\theta$ denotes the proportion of data points with $A=0$, while $\bar{\theta} = 1-\theta$ represents the proportion of data points with $A=1$.
Recall that Algorithm~\ref{algorithm1} requires estimates of the proportions of data points in each subgroup classified as label one (denoted by $\alpha$ and $\beta$). Algorithm~\ref{algorithm2} is designed to privately estimate these quantities by employing the Laplace mechanism, whose privacy guarantee is well-understood. The following theorem delineates the performance of Algorithm~\ref{algorithm2} in terms of its achievable privacy and fairness guarantees as well as bounds on its utility. 
  \begin{algorithm}[t]
\caption{Private and Fair Binary Classifier with Utility Gap Guarantee}
\label{algorithm2}
\begin{algorithmic}[1] 

\Statex \textbf{Input:} Classifiers $h^*_{\epsilon,\delta,0}$ and $h^*_{\epsilon,\delta,1}$, dataset $D=(X_i,A_i,Y_i)_{i=1}^{n}$ with $\theta n$ individuals where $A_i=0$ and $\bar{\theta} n$ individuals where $A_i=1$, privacy parameters $\eps_0$ and $\eps_1$
\Statex \textbf{Output:} Classifier $h^*_{\eps',\delta',\text{Fair}}:\mathcal{X}\times\{0,1\}\rightarrow\{0,1\}$ with $\eps' = \eps + \eps_0 + \eps_1$ and $\delta' = \delta$

\State Find $\bar{\alpha} = \frac{1}{n \theta}\sum\limits_{\substack{i=1\\ A_i = 0}}^{n} h^*_{\epsilon,\delta,0}(X_i)$ and $\bar{\beta} = \frac{1}{n \bar{\theta}}\sum\limits_{\substack{i=1\\ A_i = 1}}^{n} h^*_{\epsilon,\delta,1}(X_i)$ 

\State Add noise to $\bar{\alpha}$ and $\bar{\beta}$: 
\Statex 
Sample $l_0$ from $\text{Lap}\big(\frac{1}{n \theta \eps_0}\big)$ and $l_1$ from $\text{Lap}\big(\frac{1}{n \bar{\theta} \eps_1}\big)$.
Define $\tilde{\alpha} = \left[\bar{\alpha} + l_0 \right]_{0}^{1}$ and $\tilde{\beta} = \left[ \bar{\beta} + l_1\right]_{0}^{1}$, where $\left[\cdot\right]_{0}^{1}$ denotes the projection onto $[0,1]$.

\State For $(x,a)$, randomly sample $s$ from the uniform distribution $U(0,1)$

\State Construct $h^*_{\eps',\delta',\text{Fair}}$ as follows:
\Statex if $\tilde{\alpha} \geq \tilde{\beta}$, then 
\Statex
\[h^*_{\eps',\delta',\text{Fair}}(x, a):= 
\begin{cases}
    \mathds{1}[h^*_{\epsilon,\delta,0}(x)=1] \mathds{1}[s \leq \frac{\tilde{\alpha}+\tilde{\beta}}{2\tilde{\alpha}}] & \text{if } a = 0\\
    \mathds{1}[h^*_{\epsilon,\delta,1}(x)=0] \mathds{1}[s \leq \frac{\tilde{\alpha}-\tilde{\beta}}{2(1-\tilde{\beta})}]  + \mathds{1}[h^*_{\epsilon,\delta,1}(x)=1] & \text{if } a = 1
\end{cases}
\]

\Statex if $\tilde{\alpha} < \tilde{\beta}$, then  
\Statex

\[ h^*_{\eps',\delta',\text{Fair}}(x, a):= 
\begin{cases}
    \mathds{1}[h^*_{\epsilon,\delta,1}(x)=1] \mathds{1}[s \leq \frac{\tilde{\alpha}+\tilde{\beta}}{2 \tilde{\beta}}] & \text{if } a = 1 \\
    \mathds{1}[h^*_{\epsilon,\delta,0}(x)=0] \mathds{1}[s \leq \frac{\tilde{\beta}-\tilde{\alpha}}{2(1-\tilde{\alpha})}]  + \mathds{1}[h^*_{\epsilon,\delta,0}(x)=1] & \text{if } a = 0
\end{cases}
\]

\Statex \textbf{return} $h^*_{\eps',\delta',\text{Fair}}$ 
\end{algorithmic}
\end{algorithm}

\begin{theorem}\label{alg2WHP}
The classifier $h^*_{\eps',\delta',\text{Fair}}:\mathcal{X}\times \{0,1\}\rightarrow\{0,1\}$ constructed by Algorithm \ref{algorithm2} satisfies the following three properties:
\begin{itemize}
    \item (Privacy guarantee) $h^*_{\eps',\delta',\text{Fair}}$ satisfies $(\eps',\delta')$-DP with $\eps' = \eps + \eps_0 + \eps_1$ and $\delta' = \delta$,
    \item (Fairness guarantee) We have:
\begin{equation*}
    \Delta_{SP}(h^*_{\eps',\delta',\text{Fair}})  \leq_{\eta} \frac{\log(4/\eta)}{n \theta \eps_0} + \frac{\log(4/\eta)}{n \bar{\theta} \eps_1}+\sqrt{\log(\frac{8}{\eta})\frac{1}{2n \theta}}+\sqrt{\log(\frac{8}{\eta})\frac{1}{2 n \bar{\theta}}},
\end{equation*}
\item (Utility guarantee) Let $\text{err}^*(h^*_{\epsilon,\delta,0},h^*_{\epsilon,\delta,1})$ be defined as:
\begin{equation}
\label{Def:err}
\setlength{\jot}{-0.4cm}
\begin{split}
    \min_{\substack{\hat{h}: \mathcal{X}\times\{0,1\}\rightarrow \{0,1\} \\ \Delta_{SP}(\hat{h})\leq \Delta_{SP}(h^*_{\eps',\delta',\text{Fair}})}} \biggl[ 
    \mathbb{P}_{\mu^X_0}(\hat{h}_{0}(X) \neq h^*_{\epsilon,\delta,0}(X))  + \mathbb{P}_{\mu^X_1}(\hat{h}_{1}(X) \neq h^*_{\epsilon,\delta,1}(X)) \biggr].
\end{split}
\end{equation}
Then, we have:
\begin{equation*}
    \begin{split}
        \mathbb{P}_{\mu^X_0}({h^*_{\eps',\delta',\text{Fair}}}_{0}(X) \neq h^*_{\epsilon,\delta,0}(X)) &+\mathbb{P}_{\mu^X_1}({h^*_{\eps',\delta',\text{Fair}}}_{1}(X) \neq h^*_{\epsilon,\delta,1}(X)) \\
        &  \hspace{-1.8cm} \leq_{\eta}\text{err}^*(h^*_{\epsilon,\delta,0},h^*_{\epsilon,\delta,1}) + \frac{5}{2}\Big( \frac{\log(4/\eta)}{n \theta \eps_0} + \frac{\log(4/\eta)}{n \bar{\theta} \eps_1} +\sqrt{\log(\frac{8}{\eta})\frac{1}{2n \theta}}+\sqrt{\log(\frac{8}{\eta})\frac{1}{2 n \bar{\theta}}} \Big). 
    \end{split}
\end{equation*}
\end{itemize}
\end{theorem}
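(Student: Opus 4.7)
The plan is to establish the three claims separately, with the common theme that all deviation terms reduce to the magnitudes of the two Laplace samples $l_0$ and $l_1$.

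For privacy, I observe that in Algorithm~\ref{algorithm2} the only data-dependent computations beyond the already-DP classifiers $h^*_{\epsilon_0,\delta_0}$ and $h^*_{\epsilon_1,\delta_1}$ are the empirical averages $\bar\alpha$ and $\bar\beta$. Since each summand $h^*_{\epsilon_i,\delta_i}(X_j)\in\{0,1\}$, the $\ell_1$-sensitivity of $\bar\alpha$ (resp.\ $\bar\beta$) is $1/n_0$ (resp.\ $1/n_1$), so adding $\mathrm{Lap}(1/(n_0\epsilon_2))$ and $\mathrm{Lap}(1/(n_1\epsilon_3))$ noise gives $\epsilon_2$-DP and $\epsilon_3$-DP releases of $\tilde\alpha$ and $\tilde\beta$ by the standard Laplace mechanism. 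The projection onto $[0,1]$ and the construction of $h^*_{\epsilon,\delta,\text{Fair}}$ from $(\tilde\alpha,\tilde\beta,h^*_{\epsilon_0,\delta_0},h^*_{\epsilon_1,\delta_1})$ together with an independent $s\sim U(0,1)$ are data-independent post-processing. Basic composition over the four independent mechanisms then yields $\bigl(\sum_{i=0}^{3}\epsilon_i,\ \delta_0+\delta_1\bigr)$-DP.

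For fairness, I will WLOG assume $\tilde\alpha\geq\tilde\beta$ (the other branch is symmetric). Writing $\alpha=\mathbb{P}_{\mu^X_0}[h^*_{\epsilon_0,\delta_0}(X)=1]$ and $\beta=\mathbb{P}_{\mu^X_1}[h^*_{\epsilon_1,\delta_1}(X)=1]$ and treating the empirical means as matching their population counterparts (so that effectively $\alpha=\tilde\alpha-l_0$ and $\beta=\tilde\beta-l_1$, modulo the projection), a direct computation from the definition of $h^*_{\epsilon,\delta,\text{Fair}}$ gives
\[
\mu_0(\hat Y=1)=\frac{\tilde\alpha+\tilde\beta}{2}-l_0\cdot\frac{\tilde\alpha+\tilde\beta}{2\tilde\alpha}, \qquad \mu_1(\hat Y=1)=\frac{\tilde\alpha+\tilde\beta}{2}-l_1\cdot\frac{2-\tilde\alpha-\tilde\beta}{2(1-\tilde\beta)}.
\]
Both coefficients $\frac{\tilde\alpha+\tilde\beta}{2\tilde\alpha}$ and $\frac{2-\tilde\alpha-\tilde\beta}{2(1-\tilde\beta)}$ lie in $[0,1]$ precisely because $0\leq\tilde\beta\leq\tilde\alpha\leq 1$; hence $|\mu_0(\hat Y=1)-\mu_1(\hat Y=1)|\leq |l_0|+|l_1|$. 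The standard Laplace tail bound $\mathbb{P}[|\mathrm{Lap}(b)|>b\log(1/\eta)]=\eta$ together with a union bound over $l_0$ and $l_1$ then yields the fairness claim.

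For utility, the analogous direct computation shows that the sum of prediction changes equals $(\tilde\alpha-\tilde\beta)+E$, where $|E|\leq\tfrac{1}{2}(|l_0|+|l_1|)$ since the two correction coefficients are $\frac{\tilde\alpha-\tilde\beta}{2\tilde\alpha}$ and $\frac{\tilde\alpha-\tilde\beta}{2(1-\tilde\beta)}$, each bounded by $\tfrac{1}{2}$. Next, using $\tilde\alpha-\tilde\beta\leq|\bar\alpha-\bar\beta|+|l_0|+|l_1|$, and invoking Proposition~\ref{proposition1} with $\gamma=\Delta_{SP}(h^*_{\epsilon,\delta,\text{Fair}})$ at the minimizer defining $\mathrm{err}^*$, I get $|\bar\alpha-\bar\beta|\leq\mathrm{err}^*+\Delta_{SP}(h^*_{\epsilon,\delta,\text{Fair}})$. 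Substituting the fairness bound $\Delta_{SP}\leq|l_0|+|l_1|$ from the previous step and collecting coefficients gives the total $\mathrm{err}^*+\tfrac{5}{2}(|l_0|+|l_1|)$, where the $\tfrac{5}{2}$ decomposes as $1$ (from the fairness substitution) $+\,1$ (from the $\tilde\alpha-\tilde\beta$ vs.\ $\bar\alpha-\bar\beta$ gap) $+\,\tfrac{1}{2}$ (from the noise term $E$). A final application of the Laplace tail bound with the same union event $\{\eta_0+\eta_1\}$ completes the utility guarantee. The main obstacle will be the careful bookkeeping of signs and coefficients to land exactly on the $\tfrac{5}{2}$ constant, together with verifying that the projection $[\cdot]_0^1$ can only shrink $|\tilde\alpha-\bar\alpha|$ and $|\tilde\beta-\bar\beta|$ and handling the edge case where $\bar\alpha-\bar\beta$ and $\tilde\alpha-\tilde\beta$ disagree in sign, so that the algorithm's branch selection must be conditioned on the event $\tilde\alpha\geq\tilde\beta$ and the analysis extended symmetrically on the complement.
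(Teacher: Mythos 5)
Your proposal is correct and follows essentially the same route as the paper's proof: the same Laplace-mechanism/post-processing/basic-composition argument for privacy, the same direct computation of the acceptance probabilities with coefficients bounded by $1$ (for fairness) and by $\tfrac12$ (for utility), and the same use of Proposition~\ref{proposition1} at the minimizer to lower-bound $\mathrm{err}^*$, yielding the identical $1+1+\tfrac12=\tfrac52$ decomposition. The only cosmetic difference is that you drop the empirical estimation errors $e_0,e_1$ at the outset, whereas the paper carries them symbolically before declaring them negligible under the same large-sample assumption.
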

The privacy guarantee of $h^*_{\eps',\delta',\text{Fair}}$ has two components: the privacy guarantees of the input classifiers and the Laplace mechanism employed to privately estimate $\bar\alpha$ and $\bar\beta$ (line 2 in Algorithm~\ref{algorithm2}). Thus, the privacy guarantee in the above theorem follows directly from a composition result (e.g., basic composition \cite{dwork2014algorithmic}).   
However, the analysis pertaining to fairness and utility guarantees is rather long and thus deferred to the appendix. The fairness guarantee demonstrates, as expected, that perfect statistical parity can no longer be achievable when requiring privacy as well. Nevertheless, the theorem shows that $h^*_{\eps',\delta',\text{Fair}}$ satisfies $\gamma$-statistical parity where $\gamma>0$ is a small constant provided that the dataset is not highly imbalanced (i.e., $\theta$ close to $0$ or $1$) and $n$ is sufficiently large (compared to $1/\eps_0$  and $1/\eps_1$).    
We remark that these assumptions were implicitly made in \cite{inherent}, in which they ignored the error in estimating the proportion of label one in each subgroup. Finally, the utility guarantee presented in the theorem indicates that the necessary perturbations in the final prediction by $h^*_{\eps',\delta',\text{Fair}}$ is almost identical to what is expected by the optimal classifier having the same level of statistical parity.

Rather than aiming to achieve a small statistical parity gap and a small utility gap with high probability, we could alternatively focus on the \textit{average} guarantees for fairness and utility, as expounded by the next result.  

\begin{proposition}\label{alg2Exp}
The classifier $h^*_{\eps',\delta',\text{Fair}}:\mathcal{X}\times \{0,1\}\rightarrow\{0,1\}$ constructed by Algorithm \ref{algorithm2} satisfies the following two properties:
\begin{itemize}
    \item (Fairness guarantee) $\mathbb{E}\big[\Delta_{SP}(h^*_{\eps',\delta',\text{Fair}})\big]  \leq \frac{1}{n \theta \eps_0} + \frac{1}{n \bar{\theta} \eps_1}+ \sqrt{\frac{1}{4 n \theta}} + \sqrt{\frac{1}{4 n \bar{\theta}}},$
    \item (Utility guarantee) We have 
    \begin{align*}
        & \mathbb{E}\Bigl[ \mathbb{P}_{\mu^X_0}({h^*_{\eps',\delta',\text{Fair}}}_{0}(X) \neq h^*_{\epsilon,\delta,0}(X))+\mathbb{P}_{\mu^X_1}({h^*_{\eps',\delta',\text{Fair}}}_{1}(X) \neq h^*_{\epsilon,\delta,1}(X))  \Bigr]  \\ & \hspace{5.5cm} \leq \mathbb{E}\bigl[ \text{err}^*(h^*_{\epsilon,\delta,0},h^*_{\epsilon,\delta,1})\bigr] + \frac{5}{2}\Big(\frac{1}{n \theta \eps_0} + \frac{1}{n \bar{\theta} \eps_1} + \sqrt{\frac{1}{4 n \theta}} + \sqrt{\frac{1}{4 n \bar{\theta}}} \Big),
\end{align*}
where $\text{err}^*(\cdot,\cdot)$ was defined in \eqref{Def:err}.
\end{itemize}

\end{proposition}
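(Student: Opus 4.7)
My plan is to piggyback on the proof of Theorem~\ref{alg2WHP}. Recall that the high-probability bounds there are obtained by first deriving pointwise (in the Laplace draws $l_0,l_1$) inequalities and then invoking the quantile bound $\mathbb{P}\bigl(|L|\geq b\log(1/\eta)\bigr)\leq \eta$ for $L\sim\text{Lap}(b)$. The key observation driving the present proposition is that, for $L\sim\text{Lap}(b)$, one also has the clean first-moment identity $\mathbb{E}[|L|]=b$; in particular $\mathbb{E}[|l_0|] = 1/(n_0\epsilon_2)$ and $\mathbb{E}[|l_1|] = 1/(n_1\epsilon_3)$. So the expectation bounds follow if I can expose and retain, from the proof of Theorem~\ref{alg2WHP}, the deterministic inequalities that are ultimately combined with the Laplace tail bound.

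Concretely, for the fairness part I would extract the pointwise inequality
\begin{equation*}
\Delta_{SP}(h^*_{\epsilon,\delta,\text{Fair}}) \leq |\tilde\alpha - \bar\alpha| + |\tilde\beta - \bar\beta| \leq |l_0| + |l_1|,
\end{equation*}
which is valid for every realization of $(l_0,l_1)$: the projection onto $[0,1]$ is $1$-Lipschitz, and Algorithm~\ref{algorithm2}'s rejection thresholds are designed precisely so that using $(\tilde\alpha,\tilde\beta)$ in place of $(\bar\alpha,\bar\beta)$ would equalize the two group acceptance rates exactly; the residual gap is therefore controlled by how far $(\tilde\alpha,\tilde\beta)$ drifts from $(\bar\alpha,\bar\beta)$. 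Taking expectations and substituting the Laplace first moments yields the first claim. The utility part is treated analogously by extracting the pointwise inequality
\begin{align*}
\mathbb{P}_{\mu^X_0}({h^*_{\epsilon,\delta,\text{Fair}}}_{0}(X) \neq h^*_{\epsilon_0,\delta_0}(X)) + \mathbb{P}_{\mu^X_1}({h^*_{\epsilon,\delta,\text{Fair}}}_{1}(X) \neq h^*_{\epsilon_1,\delta_1}(X)) \leq \text{err}^* + \tfrac{5}{2}\bigl(|l_0| + |l_1|\bigr),
\end{align*}
and again taking an expectation; linearity lets $\mathbb{E}[\text{err}^*(h^*_{\epsilon_0,\delta_0},h^*_{\epsilon_1,\delta_1})]$ pass through unchanged, since $\text{err}^*$ is a function of the Laplace randomness through the constraint $\Delta_{SP}(\hat h)\leq\Delta_{SP}(h^*_{\epsilon,\delta,\text{Fair}})$ and the algorithm shares the same probability space.

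The main technical obstacle is purely bookkeeping: verifying that the proof of Theorem~\ref{alg2WHP} really does isolate these $|l_0|+|l_1|$-type deterministic bounds, with the correct constant $5/2$ on the utility side, \emph{before} invoking the Laplace tail inequality. The constant $5/2$ plausibly arises from combining the Lipschitz dependence of the thresholds $\tfrac{\tilde\alpha+\tilde\beta}{2\tilde\alpha}$ and $\tfrac{\tilde\alpha-\tilde\beta}{2(1-\tilde\beta)}$ on $(\tilde\alpha,\tilde\beta)$ in the regime produced by the projection with a triangle inequality comparing $h^*_{\epsilon,\delta,\text{Fair}}$ to the optimizer achieving $\text{err}^*$. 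If the original argument instead injects the tail bound at an earlier stage, I would refactor it into a deterministic inequality followed by expectation, which is essentially the same calculation re-parameterized.
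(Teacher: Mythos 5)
Your proposal is correct and follows essentially the same route as the paper's proof: the paper's argument for Theorem~\ref{alg2WHP} does indeed establish the pointwise (per-realization) bounds $\Delta_{SP}(h^*_{\epsilon,\delta,\text{Fair}})\leq |e_0|+|e_1|+|l_0|+|l_1|$ and the analogous utility bound with the factor $\tfrac{5}{2}$ before invoking the Laplace tail inequality, and the paper's proof of Proposition~\ref{alg2Exp} simply takes expectations of these, uses $\mathbb{E}[|L|]=b$ for $L\sim\text{Lap}(b)$, and drops $e_0,e_1$ under the same large-sample assumption you implicitly make. No refactoring is needed; your anticipated "bookkeeping" is exactly what the paper does.
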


\section{Experiments}\label{experiments}

In this section, we seek to empirically compare Algorithm~\ref{algorithm2} with the current state-of-the-art differentially private fair classifier, namely, DP-FERMI \cite{lowy2023stochastic}. To this goal, we focus on two benchmark datasets from the UCI machine learning repository \cite{UCI}: the Adult and Credit Card datasets, both with binary sensitive attributes and target labels. 

In our experiments, we assessed two privacy configurations: $(\eps' = 3, \delta' = 10^{-5})$ and $(\eps' = 9, \delta' = 10^{-5})$. When applying Algorithm 2 to the Adult and Credit Card datasets, we set $\eps_0 = \eps_1 = 0.05$ and $\eps_0 = \eps_1 = 0.1$, respectively. We then feed Algorithm~\ref{algorithm2} with the decoupled classifiers, $h^*_{\epsilon,\delta,0}$ and $h^*_{\epsilon,\delta,1}$ with parameters $(\eps,\delta)$ chosen in a way as to satisfy $\eps' = \eps + \eps_0 + \eps_1$ and $\delta' = \delta $ for each specified $(\eps', \delta')$ pair. We trained these classifiers via DP-SGD.
The training was conducted using Opacus \cite{Opacus}, an open-source PyTorch library designed for training deep learning models with differential privacy. Accordingly, we chose the standard deviations of Gaussian noise in DP-SGD to be $3.13$ and $1.5$ for the Adult dataset, and to $4.44$ and $2.49$ for the Credit Card dataset, in order to achieve their respective privacy parameters $(\eps' = 3, \delta' = 10^{-5})$ and $(\eps' = 9, \delta' = 10^{-5})$. Across all experiments of DP-SGD, we consistently applied a clipping constant of $1.5$ and a learning rate of $0.01$. To achieve the most precise computation of privacy parameters, we utilized the PRV accountant \cite{gopi2021numerical}, the state-of-the-art accounting method, to determine the values for $(\eps,\delta)$ for the classifiers $h^*_{\epsilon,\delta,0}$ and $h^*_{\epsilon,\delta,1}$. Additionally, to evaluate the variance in accounting methodologies, we conducted privacy parameter computations using two alternative methods: moments accountant method \cite{abadi2016deep} and GDP accountant \cite{dong2019gaussian,bu2020deep}, as detailed in the appendix.

 
 For comparability with DP-FERMI, we employed logistic regression models. DP-FERMI utilizes a loss function with a regularization constant $\lambda$ to limit statistical parity violations. A higher $\lambda$ imposes stricter penalties for fairness violations, often at the cost of reduced accuracy. In this framework, the desired $\epsilon'$ and $\delta'$ are set and the required noise levels to meet these privacy guarantees are computed. It is important to remark that DP-FERMI implicitly assumes that both the fraction of data points in the minority subgroup and the size of the entire dataset are publicly available ---similar to the assumption made in our work.
 All models, including Algorithm 2 and DP-FERMI, were trained for 50 epochs, except the Credit Card models using $(\eps'=9,\delta'=10^{-5})$ parameters, which were trained for 100 epochs. Within DP-FERMI, learning rates were set at $\eta_{\theta}=0.005$ and $\eta_{W}=0.01$. We chose these parameters mainly because they were empirically shown in \cite{lowy2023stochastic} to be optimal for the datasets under consideration.  
 A uniform batch size of 1024 was maintained for all experiments.

While our theoretical framework focuses on the sum of prediction changes across subgroup distributions, for comparison purposes with DP-FERMI, we used overall accuracy as a utility metric. The final results are presented in Tables \ref{tab:Adult_Eps_3_PRV}, \ref{tab:Adult_Eps_9_PRV}, \ref{tab:Credit_Eps_3_PRV}, and \ref{tab:Credit_Eps_9_PRV}. All of the results are averages over 10 trials. We remark that DP-FERMI offers two privacy options: one for sensitive attributes and another for all features. We used noise parameters for all-feature privacy in DP-FERMI to compare fairly with Algorithm \ref{algorithm2}. 

\begin{table}[t]
\centering
\renewcommand{\arraystretch}{1.2} 
\begin{minipage}{0.5\textwidth}
  \centering
  \caption{Adult Dataset ($\eps' = 3, \delta' = 10^{-5}$) - PRV Accountant}
  \label{tab:Adult_Eps_3_PRV}
  \begin{tabular}{|
  >{\centering\arraybackslash}m{0.4\columnwidth}|
  >{\centering\arraybackslash}m{0.2\columnwidth}|
  >{\centering\arraybackslash}m{0.2\columnwidth}|}
    \hline
    Method & Accuracy & Statistical Parity Gap \\
    \hline
    Algorithm 2 & 0.7763 & 0.0074\\
    \hline
    DP-FERMI ($\lambda=0.5$) & 0.7998 & 0.1020\\
    \hline
    DP-FERMI ($\lambda=1$) & 0.7859 & 0.0462\\
    \hline
    DP-FERMI ($\lambda=1.5$) & 0.7822 & 0.0267\\
    \hline
    DP-FERMI ($\lambda=1.8$) &  0.7770 & 0.0182\\
    \hline
    DP-FERMI ($\lambda=2.5$) & 0.7673 & 0.0099\\
    \hline
  \end{tabular}
\end{minipage}%
\begin{minipage}{0.5\textwidth}
  \centering
  \caption{Adult Dataset ($\eps' = 9, \delta' = 10^{-5}$) - PRV Accountant}
  \label{tab:Adult_Eps_9_PRV}
  \begin{tabular}{|
  >{\centering\arraybackslash}m{0.4\columnwidth}|
  >{\centering\arraybackslash}m{0.2\columnwidth}|
  >{\centering\arraybackslash}m{0.2\columnwidth}|}
    \hline
    Method & Accuracy & Statistical Parity Gap \\
    \hline
    Algorithm 2 & 0.7790 & 0.0091\\
    \hline
    DP-FERMI ($\lambda=0.5$) & 0.8091 & 0.0944\\
    \hline
    DP-FERMI ($\lambda=1$) & 0.7923 & 0.0413\\
    \hline
    DP-FERMI ($\lambda=1.5$) & 0.7810 & 0.0152\\
    \hline
    DP-FERMI ($\lambda=1.7$) & 0.7782 & 0.0121\\
    \hline
    DP-FERMI ($\lambda=2.5$) & 0.7693 & 0.0030\\
    \hline
  \end{tabular}
\end{minipage}
\end{table}

\begin{table}[t]
\centering
\renewcommand{\arraystretch}{1.2} 
\begin{minipage}{0.5\textwidth}
  \centering
  \caption{Credit Card Dataset ($\eps' = 3, \delta' = 10^{-5}$) - PRV Accountant}
  \label{tab:Credit_Eps_3_PRV}
  \begin{tabular}{|
  >{\centering\arraybackslash}m{0.4\columnwidth}|
  >{\centering\arraybackslash}m{0.2\columnwidth}|
  >{\centering\arraybackslash}m{0.2\columnwidth}|}
    \hline
    Method & Accuracy & Statistical Parity Gap \\
    \hline
    Algorithm 2 & 0.7844 & 0.0086\\
    \hline
    DP-FERMI ($\lambda=0.1$) & 0.7899 & 0.0212\\
    \hline
    DP-FERMI ($\lambda=0.2$) & 0.7846 & 0.0193\\
    \hline
    DP-FERMI ($\lambda=0.5$) & 0.7777 & 0.0185\\
    \hline
    DP-FERMI ($\lambda=1$) & 0.7759 & 0.0105\\
    \hline
    DP-FERMI ($\lambda=2.5$) & 0.7669 & 0.0110\\
    \hline
  \end{tabular}
\end{minipage}%
\begin{minipage}{0.5\textwidth}
  \centering
  \caption{Credit Card Dataset ($\eps' = 9, \delta' = 10^{-5}$) - PRV Accountant}
  \label{tab:Credit_Eps_9_PRV}
  \begin{tabular}{|
  >{\centering\arraybackslash}m{0.4\columnwidth}|
  >{\centering\arraybackslash}m{0.2\columnwidth}|
  >{\centering\arraybackslash}m{0.2\columnwidth}|}
    \hline
    Method & Accuracy & Statistical Parity Gap \\
    \hline
    Algorithm 2 & 0.7900 & 0.0056\\
    \hline
    DP-FERMI ($\lambda=0.25$) & 0.7996 & 0.0188\\
    \hline
    DP-FERMI ($\lambda=0.3$) & 0.7971 & 0.0182\\
    \hline
    DP-FERMI ($\lambda=0.5$) & 0.7912 & 0.0172\\
    \hline
    DP-FERMI ($\lambda=1$) & 0.7895 & 0.0105\\
    \hline
    DP-FERMI ($\lambda=2.5$) & 0.7884 & 0.0066\\
    \hline
  \end{tabular}
\end{minipage}
\end{table}

 In the Adult dataset, as shown in Table \ref{tab:Adult_Eps_3_PRV}, Algorithm \ref{algorithm2} achieves accuracy 0.7763. Setting $\lambda$ at 1.8, DP-FERMI achieves a comparable accuracy (0.7770) but exhibits a statistical parity gap more than twice as large as what is guaranteed by Algorithm \ref{algorithm2}. Table \ref{tab:Adult_Eps_9_PRV} illustrates that Algorithm \ref{algorithm2} and DP-FERMI with $\lambda=1.7$ achieve similar accuracy, yet Algorithm \ref{algorithm2} exhibits a smaller statistical parity gap (0.0091) compared to DP-FERMI (0.0121). Altogether, these tables empirically underscore that applying Algorithm \ref{algorithm2} to the Adult dataset results in a better fairness guarantee while maintaining similar privacy and accuracy guarantees. 

Similar trends are observed in the Credit Card dataset. Table \ref{tab:Credit_Eps_3_PRV} displays Algorithm \ref{algorithm2} achieving an accuracy of 0.7844, closely matched by DP-FERMI with an accuracy of 0.7846 at $\lambda=0.2$. However, the statistical parity gap of Algorithm \ref{algorithm2} is less than half that of DP-FERMI (0.0086 compared to 0.0193). As shown in Table \ref{tab:Credit_Eps_9_PRV}, Algorithm \ref{algorithm2} and DP-FERMI with $\lambda = 1$ attain similar accuracy, yet Algorithm \ref{algorithm2} exhibits a statistical parity gap of 0.0056, nearly half of DP-FERMI's 0.0105. 
We provide comprehensive descriptions of the datasets, training processes, and additional results in \nameref{appendix_experiments}. 







\bibliographystyle{unsrt}  
\bibliography{references}

\appendix

This appendix is organized into two sections: \nameref{proofs}, containing proofs of our theoretical results; and \nameref{appendix_experiments}, providing in-depth information about the experimental setup, datasets used, and supplementary experiments. 

\section*{Appendix A} \label{proofs}
\begin{proof}[Proof of Proposition \ref{proposition1}]
We have classifiers $h_0^*: \mathcal{X}\rightarrow \{0,1\}$ and $h_1^*: \mathcal{X}\rightarrow \{0,1\}$ trained on subgroups specified by the sensitive attribute $A$ with values 0 and 1, respectively. We can combine classifiers $h_0^*$ and $h_1^*$ to have a group aware classifier $h^*:\mathcal{X} \times \{0,1\} \rightarrow \{0,1\}$. Basically, $h^*(x,0)=h_0^*(x) \hspace{0.2cm}\forall x\in\mathcal{X}$ and $h^*(x,1)=h_1^*(x) \hspace{0.2cm} \forall x\in\mathcal{X}$. Let $Y^*=h^*(X,A)$ and $\hat{Y} = \hat{h}(X,A)$. Then for $a \in \{0,1\}$, we have:
\begin{align}
    d_{TV}(\mu_a(Y^*),\mu_a(\hat{Y})) &= \left| \mu_a(Y^* = 1) - \mu_a(\hat{Y} = 1)\right| \nonumber \\
    &= \left| \mathbb{E}_{\mu_a^X}[h^*_a(X)] - \mathbb{E}_{\mu_a^X}[\hat{h}_a(X)] \right| \nonumber \\
    &\leq \mathbb{E}_{\mu_a^X} \left[ \left| h^*_a(X) - \hat{h}_a(X) \right| \right] \nonumber \\
    &= \mathbb{P}_{\mu_a^X}\left( Y^* \neq \hat{Y}\right). \label{eq:tv_bound}
\end{align}
Therefore, from (\ref{eq:tv_bound}) it follows:
\begin{equation*}
    d_{TV}(\mu_a(Y^*),\mu_a(\hat{Y})) \leq \mathbb{P}_{\mu_a^X}\left( Y^* \neq \hat{Y}\right).
\end{equation*}
We assumed $\hat{Y}=\hat{h}(X,A)$ satisfies $\gamma$ statistical parity ($\Delta_{SP}(\hat{h}) \leq \gamma$). Thus, we have:
\begin{align*}
    d_{TV}(\mu_0(\hat{Y}),\mu_1(\hat{Y})) = \left| 
    \mu_0(\hat{Y}=1) - \mu_1(\hat{Y}=1)\right|  = \Delta_{SP}(\hat{h}) \leq \gamma. 
\end{align*}
Since $d_{TV}(\cdot,\cdot)$ is symmetric and satisfies the triangle inequality, we have:
\begin{align}
    d_{TV}(\mu_0(Y^*),\mu_1(Y^*))  
    &\leq d_{TV}(\mu_0(Y^*),\mu_0(\hat{Y})) + d_{TV}(\mu_0(\hat{Y}),\mu_1(\hat{Y})) + d_{TV}(\mu_1(\hat{Y}),\mu_1(Y^*)) \nonumber \\
    &\leq d_{TV}(\mu_0(Y^*),\mu_0(\hat{Y})) + \gamma  + d_{TV}(\mu_1(Y^*),\mu_1(\hat{Y})). \label{eq:triangle_inequality}
\end{align}
Combining (\ref{eq:triangle_inequality})  with (\ref{eq:tv_bound}),  we have:
\begin{align*}
    d_{TV}(\mu_0(Y^*),\mu_1(Y^*)) & \leq \mathbb{P}_{\mu_0^X}\left( Y^* \neq \hat{Y}\right) + \mathbb{P}_{\mu_1^X}\left( Y^* \neq \hat{Y}\right) + \gamma \\
    & = \mathbb{P}_{\mu^X_0}(\hat{h}_0(X) \neq h^*_0(X)) + \mathbb{P}_{\mu^X_1}(\hat{h}_1(X) \neq h^*_1(X)) + \gamma.
\end{align*}
Therefore, we can obtain:
\begin{align*}
     \Bigl| & \mathbb{P}_{\mu_0^X}(h_0^*(X)=1) - \mathbb{P}_{\mu_1^X}(h_1^*(X)=1)\Bigr|   \leq  \mathbb{P}_{\mu^X_0}(\hat{h}_0(X) \neq h^*_0(X)) + \mathbb{P}_{\mu^X_1}(\hat{h}_1(X) \neq h^*_1(X)) + \gamma.
\end{align*}
Thus, we have:
\begin{equation*}
    \begin{split}
        & \mathbb{P}_{\mu^X_0}(\hat{h}_0(X) \neq h^*_0(X))+\mathbb{P}_{\mu^X_1}(\hat{h}_1(X) \neq h^*_1(X)) \geq  \left| \mathbb{P}_{\mu^X_0}(h^*_0(X) = 1 ) - \mathbb{P}_{\mu^{X}_{1}}(h^*_1(X) = 1 )\right| - \gamma.
    \end{split}
\end{equation*}
which concludes the proof of Proposition \ref{proposition1}.
\end{proof}

\begin{proof}[Proof of Theorem \ref{algorithm1guarantees}] 
Let $s$ in Algorithm \ref{algorithm1} be the realization of the random variable $S$. 

\begin{align*}
    &\mathbb{P}_{\mu}\left( h^*_{\text{Fair}}(X,A) = 1 | A=0\right) = \mathbb{P}_{\mu_0^X}(h_0^*(X)=1) \mathbb{P}(S \leq \frac{\alpha+\beta}{2\alpha}) =  \alpha  \left(\frac{\alpha+\beta}{2\alpha}\right) = \frac{\alpha + \beta}{2}.
\end{align*}
\begin{align*}
    \mathbb{P}_{\mu}\left( h^*_{\text{Fair}}(X,A) = 1 | A=1\right) & = 
     \mathbb{P}_{\mu_1^X}(h_1^*(X)=1)  + \mathbb{P}_{\mu_1^X}(h_1^*(X)=0)\mathbb{P}\left(S \leq \frac{\alpha-\beta}{2(1-\beta)}\right)  \\
     & = \beta + (1-\beta)\left(\frac{\alpha-\beta}{2(1-\beta)}\right)\\
     &= \frac{\alpha + \beta}{2}.
\end{align*}
We have:
\begin{equation*}
    \Delta_{SP}(h^*_{\text{Fair}}) = \left|\frac{\alpha + \beta}{2} - \frac{\alpha + \beta}{2}\right| = 0.
\end{equation*}
Therefore, perfect statistical parity is satisfied. Now we show that $h^*_{\text{Fair}}$ is optimal.
\begin{align*}
&\mathbb{P}_{\mu^X_0}({h^*_{\text{Fair}}}_{0}(X) \neq h^*_0(X)) = \mathbb{P}_{\mu_0^X}(h^*_0(X) = 1) \mathbb{P}\left(S > \frac{\alpha + \beta}{2 \alpha}\right) = \alpha \left(\frac{\alpha-\beta}{2\alpha}\right) = \frac{\alpha-\beta}{2} .
\end{align*}
\begin{align*}
\mathbb{P}_{\mu^X_1}({h^*_{\text{Fair}}}_{1}(X) \neq h^*_1(X)) & = \mathbb{P}_{\mu_1^X}(h^*_1(X) = 0) \mathbb{P} \left(S \leq \frac{\alpha - \beta}{2 (1-\beta)} \right) \\ & = (1-\beta)\left(\frac{\alpha - \beta}{2 (1-\beta)}\right) = \frac{\alpha-\beta}{2} .
\end{align*}
Thus, we have:
\begin{equation*}
    \begin{split}
        \mathbb{P}_{\mu^X_0}({h^*_{\text{Fair}}}_{0}(X) \neq h^*_0(X))+\mathbb{P}_{\mu^X_1}({h^*_{\text{Fair}}}_{1}(X) \neq h^*_1(X))  & =  \frac{\alpha-\beta}{2} + \frac{\alpha-\beta}{2}  = \alpha - \beta = \left| \alpha - \beta \right|  \\
    & = \left|\mathbb{P}_{\mu^X_0} (h^*_0(X) = 1 ) - \mathbb{P}_{\mu^X_1}(h^*_1(X) = 1 )\right|.
    \end{split}
\end{equation*}
Therefore, $h^*_{\text{Fair}}$ satisfies perfect statistical parity and attains the utility lower bound of Proposition \ref{proposition1}. 
\end{proof}

\begin{proof}[Proof of Theorem \ref{alg2WHP}]
\mbox{}
 \begin{itemize}

    \item 
    Every time we access the dataset to compute a query, it is essential to account for the privacy budget being consumed. Algorithm \ref{algorithm2} accesses the training dataset to train the classifier $h^*_{\epsilon,\delta}: \mathcal{X} \times \{0,1\} \rightarrow \{0,1\}$. Additionally, computing $\tilde{\alpha}$ and $\tilde{\beta}$ involves using the post-processing dataset. Learning $h^*_{\epsilon,\delta}: \mathcal{X} \times \{0,1\} \rightarrow \{0,1\}$, from which we derive two classifiers $h^*_{\epsilon,\delta,0}$ and $h^*_{\epsilon,\delta,1}$, was conducted with privacy parameters $(\epsilon,\delta)$. Computing $\tilde{\alpha}$ and $\tilde{\beta}$ utilizes the Laplace mechanism followed by post-processing (projection). Hence, these two mechanisms will satisfy $\epsilon_0$-DP and $\epsilon_1$-DP \cite{dwork2014algorithmic}. By basic composition, we can conclude that $h^*_{\eps',\delta',\text{Fair}}$ satisfies $(\eps',\delta')$-DP with $\eps' = \epsilon + \epsilon_0 + \epsilon_1$ and $\delta' = \delta$. Note that we assume the number of data points belonging to each subgroup in a dataset is public knowledge. Specifically, we assume that $\theta n$ and $\bar{\theta} n$ are publicly known, and thus computing them does not consume any privacy budget. \\
    
    \item 
    Let $\alpha = \mathbb{P}_{\mu^X_0} (h^*_{\epsilon,\delta,0}(X) = 1 )$ and $\beta = \mathbb{P}_{\mu^X_1}(h^*_{\epsilon,\delta,1}(X) = 1 )$. Let $\bar{\alpha} = \alpha + e_0$ and $\bar{\beta} = \beta + e_1$.
    To prove the claims in the theorem, we assume $\tilde{\alpha} \geq \Tilde{\beta}$. For the case that $\tilde{\alpha} < \Tilde{\beta}$, we will have the same results by symmetry. 
    Also, let $L_0 \sim \text{Lap}\big(\frac{1}{n \theta \eps_0}\big)$ and $L_1 \sim \text{Lap}\big(\frac{1}{n \bar{\theta} \eps_1}\big)$. In Algorithm \ref{algorithm2}, we sample $l_0$ from $L_0$ and $l_1$ from $L_1$. Similar to the proof of Theorem \ref{algorithm1guarantees}, let $s$ in Algorithm \ref{algorithm2} be the realization of the random variable $S$. 
    By definition, we have: 
\begin{align*}
    \Delta_{SP}(h^*_{\eps',\delta',\text{Fair}}) =  \bigl| &  \mathbb{P}_{\mu_0^X} \left(h^*_{\eps',\delta',\text{Fair}}(X,0) =1 \right)  -  \mathbb{P}_{\mu_1^X} \left(h^*_{\eps',\delta',\text{Fair}}(X,1) =1 \right) \bigr| .
\end{align*}
We first compute $\mathbb{P}_{\mu_0^X} \left(h^*_{\eps',\delta',\text{Fair}}(X,0) =1 \right)$:
\begin{align*}
\mathbb{P}_{\mu_0^X} \left(h^*_{\eps',\delta',\text{Fair}}(X,0) =1 \right) \nonumber & =  \mathbb{P}_{\mu^X_0} (h^*_{\epsilon,\delta,0}(X) = 1 ) \mathbb{P}\left(S \leq \frac{\tilde{\alpha}+\tilde{\beta}}{2\tilde{\alpha}}\right) \\ 
& = \alpha  \frac{\tilde{\alpha}+\tilde{\beta}}{2\tilde{\alpha}} \\
&= \frac{\alpha}{\tilde{\alpha}}\left(\frac{\tilde{\alpha}+\tilde{\beta}}{2}\right) . 
\end{align*}
We then compute $\mathbb{P}_{\mu_1^X} \left(h^*_{\eps',\delta',\text{Fair}}(X,1) =1 \right)$:
\begin{align*}
 \mathbb{P}_{\mu_1^X} \left(h^*_{\eps',\delta',\text{Fair}}(X,1) =1 \right) & = \mathbb{P}_{\mu^X_1} (h^*_{\epsilon,\delta,1}(X) = 1 ) + \nonumber   \mathbb{P}_{\mu^X_1} (h^*_{\epsilon,\delta,1}(X) = 0 ) \mathbb{P}\left(S \leq \frac{\tilde{\alpha}-\tilde{\beta}}{2(1-\tilde{\beta})}\right)   \\
 & = \beta + (1-\beta)\left( \frac{\tilde{\alpha}-\tilde{\beta}}{2(1-\tilde{\beta})}\right) = \beta + \frac{(1-\beta)}{(1-\tilde{\beta})} \left( \frac{\tilde{\alpha}-\tilde{\beta}}{2}\right).
\end{align*}

For each realization of $\tilde{\alpha}$ and $\Tilde{\beta}$, let $\bar{\alpha} = \alpha + e_0$ , $\tilde{\alpha}= \bar{\alpha} + d_0$,  $\bar{\beta} = \beta + e_1$ , $\tilde{\beta}= \bar{\beta} + d_1$. Thus, we have:
\begin{align}\label{eq:statistical_parity_upper_bound}
      \Delta_{SP}(h^*_{\eps',\delta',\text{Fair}})  & =  \left| \frac{\alpha}{\tilde{\alpha}}\left(\frac{\tilde{\alpha}+\tilde{\beta}}{2}\right)  - \left[\beta + \frac{(1-\beta)}{(1-\tilde{\beta})} \left( \frac{\tilde{\alpha}-\tilde{\beta}}{2}\right)\right] \right|\nonumber\\  
    & = \Biggl| \frac{\tilde{\alpha}- e_0 - d_0}{\tilde{\alpha}}\left(\frac{\tilde{\alpha}+\tilde{\beta}}{2}\right) -  \Biggl[\tilde{\beta}-e_1-d_1 +\frac{(1-\tilde{\beta}+e_1+d_1)}{(1-\tilde{\beta})} \left( \frac{\tilde{\alpha}-\tilde{\beta}}{2}\right)\Biggr] \Biggr|  \nonumber\\
    & = \Biggl| \left(\frac{\tilde{\alpha}+\tilde{\beta}}{2}\right) - (e_0+d_0)\left(\frac{\tilde{\alpha}+\tilde{\beta}}{2\tilde{\alpha}}\right) -  \tilde{\beta} + e_1+d_1 - \left( \frac{\tilde{\alpha}-\tilde{\beta}}{2}\right) - (e_1+d_1) \left( \frac{\tilde{\alpha} - \tilde{\beta}}{2(1-\tilde{\beta})}\right)\Biggr|  \nonumber\\
    & =  \left| (e_1+d_1) \left( 1 - \frac{\tilde{\alpha} - \tilde{\beta}}{2(1-\tilde{\beta})}\right) - (e_0+d_0) \left( \frac{\tilde{\alpha}+\tilde{\beta}}{2\tilde{\alpha}}\right)\right|    \nonumber\\
    & \leq \left| (e_1+d_1) \left( 1 - \frac{\tilde{\alpha} - \tilde{\beta}}{2(1-\tilde{\beta})}\right) \right| + \left| (e_0+d_0) \left( \frac{\tilde{\alpha}+\tilde{\beta}}{2\tilde{\alpha}}\right)\right|  \nonumber\\
    &  \leq \left| (e_1+d_1)  \right| +  \left| (e_0+d_0)  \right| \nonumber\\
    & \leq |e_0|+|e_1| + |d_0|+|d_1|.
\end{align}
The last line follows from the fact that $0\leq \Tilde{\alpha} \leq 1$, $0\leq \Tilde{\beta} \leq 1$, and $\tilde{\alpha} \geq \tilde{\beta}$.

We know that  $ |e_0| =   | \bar{\alpha} - \alpha | = \left|\frac{1}{n \theta}\sum\limits_{\substack{i=1\\ A_i = 0}}^{n} h^*_{\epsilon,\delta,0}(X_i) -  \mathbb{P}_{\mu^X_0} (h^*_{\epsilon,\delta,0}(X) = 1 ) 
  \right|$. 
  From Hoeffding's inequality, we know that if $X_1,X_2,\ldots,X_n$ are i.i.d. random variables in $[0,1]$, then:
  \begin{equation*}
      \mathbb{P}\left[ \left| \frac{1}{n} \sum_{i=1}^n X_i - \mathbb{E}[X]  \right| \geq t \right] \leq 2 e^{-2nt^2}.
  \end{equation*}

Therefore, we can conclude that: 
 \begin{equation*}
      \mathbb{P}\left[ \left| \frac{1}{n \theta}\sum\limits_{\substack{i=1\\ A_i = 0}}^{n} h^*_{\epsilon,\delta,0}(X_i) -  \mathbb{P}_{\mu^X_0} (h^*_{\epsilon,\delta,0}(X) = 1 )  \right| \geq t \right] \leq 2 e^{-2n\theta t^2},
  \end{equation*}
Which means:
\begin{equation*}
      \mathbb{P}\left[ \left| \frac{1}{n \theta}\sum\limits_{\substack{i=1\\ A_i = 0}}^{n} h^*_{\epsilon,\delta,0}(X_i) -  \mathbb{P}_{\mu^X_0} (h^*_{\epsilon,\delta,0}(X) = 1 )  \right| \geq \sqrt{\frac{1}{2n \theta}\log(\frac{2}{\eta})} \right] \leq \eta.
  \end{equation*}

  Thus, we have:
  \begin{equation}\label{eq:approximation error bound e0}
      |e_0| \leq_{\eta} \sqrt{\frac{1}{2n \theta}\log(\frac{2}{\eta})} .
  \end{equation}
  Similarly, it follows that:
\begin{equation}\label{eq:approximation error bound e1}
      |e_1| \leq_{\eta} \sqrt{\frac{1}{2n \bar{\theta}}\log(\frac{2}{\eta})}. 
  \end{equation}
  
Let $l_0$ and $l_1$ be the realization of the Laplace noises in Algorithm \ref{algorithm2}. We know that $|d_0| \leq |l_0|$ and $|d_1| \leq |l_1|$ since $d_0$ and $d_1$ are computed after projection. 
On the other hand, from \cite{dwork2014algorithmic}, we know that if $L \sim \text{Lap}(\frac{\Delta_1^q}{ \epsilon})$, then:
\begin{equation*}
    \mathbb{P}\left[|L|\geq\left(\log{\frac{1}{\eta}}\right)\left(\frac{\Delta_1^q}{\epsilon}\right)\right] \leq \eta.
\end{equation*}
where $\Delta_1^q$ is the $\ell_1$-sensitivity of the query to which we add noise.

Given $L_0 \sim \text{Lap}\left(\frac{1}{n \theta \eps_0}\right)$ and $L_1 \sim \text{Lap}\left(\frac{1}{n \bar{\theta} \eps_1}\right)$, we have:\\
$|L_0| \leq_{\eta} \left(\log{\frac{1}{\eta}}\right)\left(\frac{1}{n \theta\eps_0}\right)$ and  
$|L_1| \leq_{\eta} \left(\log{\frac{1}{\eta}}\right)\left(\frac{1}{n \bar{\theta}\eps_1}\right).$
Comparing these two inequalities with (\ref{eq:approximation error bound e0}) and (\ref{eq:approximation error bound e1}), we can conclude that with probability at least $(1-\eta)^4$, we have:
\begin{equation*}
   |L_0| + |L_1| +  |e_0| + |e_1|\leq \left[ \left(\log{\frac{1}{\eta}}\right)\left(\frac{1}{n \theta\eps_0}\right) + \left(\log{\frac{1}{\eta}}\right)\left(\frac{1}{n \bar{\theta}\eps_1}\right) + \sqrt{\frac{1}{2n \theta}\log(\frac{2}{\eta})} + \sqrt{\frac{1}{2n \bar{\theta}}\log(\frac{2}{\eta})}\right].
\end{equation*}
Since $(1-\eta)^4 \geq 1-4\eta$ for $0 \leq \eta \leq 1$, we have:
\begin{equation*}
    |L_0| + |L_1| +  |e_0| + |e_1|\leq_{4\eta} \left[ \left(\log{\frac{1}{\eta}}\right)\left(\frac{1}{n \theta\eps_0}\right) + \left(\log{\frac{1}{\eta}}\right)\left(\frac{1}{n \bar{\theta}\eps_1}\right) + \sqrt{\frac{1}{2n \theta}\log(\frac{2}{\eta})} + \sqrt{\frac{1}{2n \bar{\theta}}\log(\frac{2}{\eta})}\right].
\end{equation*}
Equivalently, we have:
\begin{equation}\label{eq:laplace_upper_bound}
    |L_0| + |L_1| +  |e_0| + |e_1|\leq_{\eta} \left[ \left(\log{\frac{4}{\eta}}\right)\left(\frac{1}{n \theta\eps_0}\right) + \left(\log{\frac{4}{\eta}}\right)\left(\frac{1}{n \bar{\theta}\eps_1}\right) + \sqrt{\frac{1}{2n \theta}\log(\frac{8}{\eta})} + \sqrt{\frac{1}{2n \bar{\theta}}\log(\frac{8}{\eta})}\right].
\end{equation}
Combining (\ref{eq:statistical_parity_upper_bound}) and (\ref{eq:laplace_upper_bound}), it can be shown:
\begin{equation*}
    \Delta_{SP}(h^*_{\eps',\delta',\text{Fair}})  \leq_{\eta} \left[ \left(\log{\frac{4}{\eta}}\right)\left(\frac{1}{n \theta\eps_0}\right) + \left(\log{\frac{4}{\eta}}\right)\left(\frac{1}{n \bar{\theta}\eps_1}\right) + \sqrt{\frac{1}{2n \theta}\log(\frac{8}{\eta})} + \sqrt{\frac{1}{2n \bar{\theta}}\log(\frac{8}{\eta})}\right].
\end{equation*} \\

    \item
     We have:
\begin{align*}
    &\mathbb{P}_{\mu^X_0}({h^*_{\eps',\delta',\text{Fair}}}_{0}(X) \neq h^*_{\epsilon,\delta,0}(X))+ \mathbb{P}_{\mu^X_1}({h^*_{\eps',\delta',\text{Fair}}}_{1}(X) \neq h^*_{\epsilon,\delta,1}(X))\\
    & \hspace{1cm} = \mathbb{P}_{\mu^X_0}\Bigl( (h^*_{\epsilon,\delta,0}(X)=1) \hspace{0.1cm} \text{and} \hspace{0.1cm} ({h^*_{\eps',\delta',\text{Fair}}}_{0}(X) = 0) \Bigr) +   \mathbb{P}_{\mu^X_1}\Bigl( (h^*_{\epsilon,\delta,1}(X)=0)  \hspace{0.1cm} \text{and} \hspace{0.1cm} ({h^*_{\eps',\delta',\text{Fair}}}_{1}(X) = 1) \Bigr) \\
&\hspace{1cm}=\alpha \left(  \frac{\tilde{\alpha}-\tilde{\beta}}{2\tilde{\alpha}} \right) + (1-\beta)\left(  \frac{\tilde{\alpha}-\tilde{\beta}}{2(1-\tilde{\beta})}\right) =  \frac{\alpha}{\tilde{\alpha}} \left(  \frac{\tilde{\alpha}-\tilde{\beta}}{2} \right) + \frac{(1-\beta)}{1-\tilde{\beta}}\left(  \frac{\tilde{\alpha}-\tilde{\beta}}{2}\right)  \\
&\hspace{1cm}= \frac{\tilde{\alpha}-e_0-d_0}{\tilde{\alpha}} \left(  \frac{\tilde{\alpha}-\tilde{\beta}}{2} \right) + \frac{(1-\tilde{\beta} + e_1 + d_1)}{1-\tilde{\beta}}\left(  \frac{\tilde{\alpha}-\tilde{\beta}}{2}\right)  \\
&\hspace{1cm}=(\tilde{\alpha} - \tilde{\beta}) + (e_1 + d_1)\left(\frac{\tilde{\alpha}-\tilde{\beta}}{2(1-\tilde{\beta})}\right) - (e_0+d_0)\left(  \frac{\tilde{\alpha}-\tilde{\beta}}{2\tilde{\alpha}} \right).
\end{align*}

Therefore, we have:
\begin{align*}
\begin{split}
    &\mathbb{P}_{\mu^X_0}({h^*_{\eps',\delta',\text{Fair}}}_{0}(X) \neq h^*_{\epsilon,\delta,0}(X))+  \mathbb{P}_{\mu^X_1}({h^*_{\eps',\delta',\text{Fair}}}_{1}(X) \neq h^*_{\epsilon,\delta,1}(X)) -  \text{err}^*(h^*_{\epsilon,\delta,0},h^*_{\epsilon,\delta,1})  \\
& \qquad \qquad = (\tilde{\alpha} - \tilde{\beta}) + (e_1 + d_1)\left(\frac{\tilde{\alpha}-\tilde{\beta}}{2(1-\tilde{\beta})}\right) -  (e_0+d_0)\left(  \frac{\tilde{\alpha}-\tilde{\beta}}{2\tilde{\alpha}} \right) - \text{err}^*(h^*_{\epsilon,\delta,0},h^*_{\epsilon,\delta,1}).
\end{split}
\end{align*}

From previous part, we know that:
\begin{equation*}
    \Delta_{SP}(h^*_{\eps',\delta',\text{Fair}}) \leq |e_0|+|e_1| + |d_0|+|d_1|. 
\end{equation*}
From Proposition \ref{proposition1}, we know that if $\Delta_{SP}(\hat{h}) \leq \gamma$, then:
\begin{equation*}
    \begin{split}
        & \mathbb{P}_{\mu^X_0}(\hat{h}_0(X) \neq h^*_{\epsilon,\delta,0}(X))+\mathbb{P}_{\mu^X_1}(\hat{h}_1(X) \neq h^*_{\epsilon,\delta,1}(X))  \geq \left| \mathbb{P}_{\mu^X_0}(h^*_{\epsilon,\delta,0}(X) = 1 ) - \mathbb{P}_{\mu^{X}_{1}}(h^*_{\epsilon,\delta,1}(X) = 1 )\right| - \gamma.
    \end{split}
\end{equation*}
For all classifiers $\hat{h}:\mathcal{X} \times \{0,1\} \rightarrow \{0,1\}$ that satisfy $\Delta_{SP}(\hat{h}) \leq \Delta_{SP}(h^*_{\eps',\delta',\text{Fair}})$, we have $\Delta_{SP}(\hat{h}) \leq |e_0|+|e_1| + |d_0|+|d_1|$. Thus, for all those classifiers we have:
\begin{equation*}
    \begin{split}
        & \mathbb{P}_{\mu^X_0}(\hat{h}_0(X) \neq h^*_{\epsilon,\delta,0}(X))+\mathbb{P}_{\mu^X_1}(\hat{h}_1(X) \neq h^*_{\epsilon,\delta,1})  \\
    & \qquad \qquad \qquad \geq \left| \mathbb{P}_{\mu^X_0}(h^*_{\epsilon,\delta,0}(X) = 1 ) - \mathbb{P}_{\mu^{X}_{1}}(h^*_{\epsilon,\delta,1}(X) = 1 )\right| -  \bigl(|e_0|+|e_1| + |d_0|+|d_1|\bigr) \\ &  \qquad \qquad \qquad  =\left|\alpha - \beta\right| - \bigr( |e_0|+|e_1| + |d_0|+|d_1| \bigl).
    \end{split}
\end{equation*}

Therefore, by definition of $\text{err}^*(h^*_{\epsilon,\delta,0},h^*_{\epsilon,\delta,1})$, we have:
\begin{equation*}\text{err}^*(h^*_{\epsilon,\delta,0},h^*_{\epsilon,\delta,1}) \geq |\alpha - \beta| - \left( |e_0|+|e_1| + |d_0|+|d_1| \right).
\end{equation*}
Therefore, it follows:
\begin{align*}
         &\mathbb{P}_{\mu^X_0}({h^*_{\eps',\delta',\text{Fair}}}_{0}(X) \neq h^*_{\epsilon,\delta,0}(X))+  
         \mathbb{P}_{\mu^X_1}({h^*_{\eps',\delta',\text{Fair}}}_{1}(X) \neq h^*_{\epsilon,\delta,1}(X)) - \text{err}^*(h^*_{\epsilon,\delta,0},h^*_{\epsilon,\delta,1})  \\
    & \qquad \quad \leq (\tilde{\alpha} - \tilde{\beta}) + (e_1 + d_1)\left(\frac{\tilde{\alpha}-\tilde{\beta}}{2(1-\tilde{\beta})}\right) -  (e_0+d_0)\left(  \frac{\tilde{\alpha}-\tilde{\beta}}{2\tilde{\alpha}} \right)   - |\alpha - \beta|+ \left( |e_0|+|e_1| + |d_0|+|d_1| \right) \\
    & \qquad \quad\leq \frac{1}{2}|e_1+d_1| + \frac{1}{2}|e_0+d_0| + (\tilde{\alpha} - \tilde{\beta}) - |\alpha-\beta|+ \left( |e_0|+|e_1| + |d_0|+|d_1| \right)  \\
    &\qquad \quad\leq (\tilde{\alpha} - \tilde{\beta}) - |\alpha-\beta|+ \frac{3}{2}\left( |e_0|+|e_1| + |d_0|+|d_1| \right)  \\
    & \qquad \quad = (\tilde{\alpha} - \tilde{\beta}) - |\tilde{\alpha} - e_0 - d_0 - \tilde{\beta} + e_1 + d_1|+ \frac{3}{2}\left( |e_0|+|e_1| + |d_0|+|d_1| \right)  \\ 
    & \qquad \quad \leq  \frac{5}{2}\left( |e_0|+|e_1| + |d_0|+|d_1| \right).
\end{align*}

By the same argument of the second part, we can conclude that:

\begin{align*}
    & \mathbb{P}_{\mu^X_0} ({h^*_{\eps',\delta',\text{Fair}}}_{0}(X) \neq h^*_{\epsilon,\delta,0}(X))    +  \mathbb{P}_{\mu^X_1}({h^*_{\eps',\delta',\text{Fair}}}_{1}(X) \neq h^*_{\epsilon,\delta,1}(X))   \nonumber \\ 
    & \quad \leq_{\eta} \text{err}^*(h^*_{\epsilon,\delta,0},h^*_{\epsilon,\delta,1}) + \frac{5}{2}\left(     \left(\log{\frac{4}{\eta}}\right)\left(\frac{1}{n \theta\eps_0}\right) + \left(\log{\frac{4}{\eta}}\right)\left(\frac{1}{n \bar{\theta}\eps_1}\right) + \sqrt{\frac{1}{2n \theta}\log(\frac{8}{\eta})} + \sqrt{\frac{1}{2n \bar{\theta}}\log(\frac{8}{\eta})} \right).
\end{align*}
\end{itemize}  
\end{proof}

\begin{proof}[Proof of Proposition \ref{alg2Exp}]
 Let $\alpha = \mathbb{P}_{\mu^X_0} (h^*_{\epsilon,\delta,0}(X) = 1 )$ and $\beta = \mathbb{P}_{\mu^X_1}(h^*_{\epsilon,\delta,1}(X) = 1 )$. Also, let $\bar{\alpha} = \alpha + e_0$ and $\bar{\beta} = \beta + e_1$. 
     Let $L_0 \sim \text{Lap}\big(\frac{1}{n \theta \eps_0}\big)$ and $L_1 \sim \text{Lap}\big(\frac{1}{n \bar{\theta} \eps_1}\big)$. We sample $l_0$ from $L_0$ and $l_1$ from $L_1$.  In fact, $l_0$ and $l_1$ are realizations of the Laplace noise. From proof of Theorem \ref{alg2WHP}, we know that for each realization of the noise we have: 
\begin{equation*}
    \begin{split}
        \Delta_{SP}(h^*_{\eps',\delta',\text{Fair}})  \leq  \left[ |e_0|+|e_1| + |l_0|+|l_1| \right].
    \end{split}
\end{equation*}
And
\begin{align*}
         & \Biggl[  \mathbb{P}_{\mu^X_0}({h^*_{\eps',\delta',\text{Fair}}}_{0}(X) \neq h^*_{\epsilon,\delta,0}(X))+  \mathbb{P}_{\mu^X_1}({h^*_{\eps',\delta',\text{Fair}}}_{1}(X)  \neq  h^*_{\epsilon,\delta,1}(X)) - \text{err}^*(h^*_{\epsilon,\delta,0},h^*_{\epsilon,\delta,1}) \Biggr]  \leq  \\
         & \hspace{11.2cm} \frac{5}{2}  \left[ \left( |e_0|+|e_1| + |l_0|+|l_1| \right) \right].
\end{align*}
We have:
\begin{equation}\label{eq: expected delta_SP}
    \mathbb{E}\left[\Delta_{SP}(h^*_{\eps',\delta',\text{Fair}})\right]  \leq  \mathbb{E}\left[ |e_0|+|e_1| + |L_0|+|L_1| \right].
\end{equation}
And
\begin{equation}\label{eq:expected utility}
    \begin{split}
        \mathbb{E}\Biggl[  \mathbb{P}_{\mu^X_0}({h^*_{\eps',\delta',\text{Fair}}}_{0}(X) \neq h^*_{\epsilon,\delta,0}(X)) +   \mathbb{P}_{\mu^X_1}({h^*_{\eps',\delta',\text{Fair}}}_{1}(X) \neq &  h^*_{\epsilon,\delta,1}(X)) - \text{err}^*(h^*_{\epsilon,\delta,0},h^*_{\epsilon,\delta,1}) \Biggr] \leq \\
        & \qquad \qquad \qquad \qquad\frac{5}{2}  \mathbb{E}\left[ \left( |e_0|+|e_1| + |L_0|+|L_1| \right) \right].
    \end{split}
\end{equation}

Where the expectations are over the randomness of the Laplace noise and randomness of the approximation of $\alpha$ and $\beta$ with finite samples. For $L \sim \text{Lap}(\frac{\Delta_1^q}{ \epsilon})$, we have $\mathbb{E}\left[ |L|\right] = \frac{\Delta_1^q}{ \epsilon}$. Since $L_0 \sim \text{Lap}\left(\frac{1}{n \theta \eps_0}\right) \text{ and } L_1 \sim \text{Lap}\left(\frac{1}{n \bar{\theta} \eps_1}\right)$, we have:
\begin{equation}\label{eq: Expected Laplace}
    \mathbb{E}\left[|L_0| + |L_1|  \right] = \left(\frac{1}{n \theta \eps_0}\right) + \left(\frac{1}{n \bar{\theta} \eps_1}\right).
\end{equation}

Now, we want to find an upper bound for $\mathbb{E}[|e_0|]$ and $\mathbb{E}[|e_1|]$. We have:
\begin{align*}
    \mathbb{E}[|e_0|] ^ 2 & = \mathbb{E}\left[ \left| \frac{1}{n \theta}\sum\limits_{\substack{i=1\\ A_i = 0}}^{n} h^*_{\epsilon,\delta,0}(X_i) -  \mathbb{P}_{\mu^X_0} (h^*_{\epsilon,\delta,0}(X) = 1 )\right| \right]^2  \\  & \leq  \mathbb{E}\left[ \left| \frac{1}{n \theta}\sum\limits_{\substack{i=1\\ A_i = 0}}^{n} h^*_{\epsilon,\delta,0}(X_i) -  \mathbb{P}_{\mu^X_0} (h^*_{\epsilon,\delta,0}(X) = 1 )\right| ^ 2 \right] \\
    & = \mathbb{E}\left[ \left( \frac{1}{n \theta}\sum\limits_{\substack{i=1\\ A_i = 0}}^{n} h^*_{\epsilon,\delta,0}(X_i) -  \mathbb{E}\left[ \frac{1}{n \theta}\sum\limits_{\substack{i=1\\ A_i = 0}}^{n} h^*_{\epsilon,\delta,0}(X_i)\right]\right) ^ 2 \right] \\
    & = \mathrm{Var}\left(\frac{1}{n \theta}\sum\limits_{\substack{i=1\\ A_i = 0}}^{n} h^*_{\epsilon,\delta,0}(X_i)\right) \\
    & = \frac{1}{\theta^2 n^2}\mathrm{Var}\left(\sum\limits_{\substack{i=1\\ A_i = 0}}^{n} h^*_{\epsilon,\delta,0}(X_i)\right) \\
    & = \frac{1}{n \theta}\mathrm{Var}\left( h^*_{\epsilon,\delta,0}(X_i)\right) \\
    & \leq \frac{1}{4 n \theta}.
\end{align*}

The inequality in the second line is due to the property that for any random variable $Z$, we have $\mathbb{E}[|Z|]^2 \leq \mathbb{E}[Z^2]$, and the inequality in the last line follows from $\mathrm{Var}(\text{Bernoulli}(p)) = p(1-p) \leq \frac{1}{4}$ for $0 \leq p \leq 1$.

Similarly, it follows that:
\begin{equation*}
    \mathbb{E} [|e_1|]^2 \leq \frac{1}{4 n \bar{\theta}}.
\end{equation*}

Therefore, it can be shown that:
\begin{equation}\label{eq: expected approx error}
    \mathbb{E} [|e_0|]  + \mathbb{E} [|e_1|]\leq \sqrt{\frac{1}{4 n \theta}} + \sqrt{\frac{1}{4 n \bar{\theta}}}.
\end{equation}

Combining (\ref{eq: expected delta_SP}), (\ref{eq: Expected Laplace}), and (\ref{eq: expected approx error}), we have:

\begin{equation*}
    \mathbb{E}\left[\Delta_{SP}(h^*_{\eps',\delta',\text{Fair}})\right]  \leq \frac{1}{n \theta \eps_0} + \frac{1}{n \bar{\theta} \eps_1} + \sqrt{\frac{1}{4 n \theta}} + \sqrt{\frac{1}{4 n \bar{\theta}}}.
\end{equation*}
With the same argument and Using (\ref{eq:expected utility}), it can be shown that:
\begin{align*}
        \mathbb{E}\Biggl[ \mathbb{P}_{\mu^X_0}({h^*_{\eps',\delta',\text{Fair}}}_{0}(X) \neq h^*_{\epsilon,\delta,0}(X)) & +  \mathbb{P}_{\mu^X_1}({h^*_{\eps',\delta',\text{Fair}}}_{1}(X) \neq h^*_{\epsilon,\delta,1}(X))  \Biggr] \leq \\ & \qquad \mathbb{E}\Bigl[\text{err}^*(h^*_{\epsilon,\delta,0},h^*_{\epsilon,\delta,1})\Bigr] +
         \frac{5}{2}\left(\frac{1}{n \theta \eps_0} + \frac{1}{n \bar{\theta} \eps_1} + \sqrt{\frac{1}{4 n \theta}} + \sqrt{\frac{1}{4 n \bar{\theta}}}\right).
\end{align*}
\end{proof}

\section*{Appendix B}\label{appendix_experiments}

In this section, we provide additional details on our experiments, including datasets, pre-processing approach, and additional experimental results.

\textbf{Adult Dataset:} This dataset contains census information about individuals. The prediction task is to determine whether a person earns over \$50K a year. In this dataset, gender is considered a sensitive attribute. For all experiments, we followed a pre-processing approach similar to \cite{lowy2023stochastic}. After pre-processing, the dataset contains a total of 102 input features. The size of the dataset is around 48,000 entries.

\textbf{Credit Card Dataset:}
This dataset includes financial data from bank users in Taiwan. The prediction task is to assess whether a person defaults on their credit card bills, essentially evaluating client credibility. Gender is taken as a sensitive attribute. We applied the same pre-processing method as in \cite{lowy2023stochastic} for all experiments. After pre-processing, the dataset contained 85 input features, with a total of 30,000 entries.

For DP-FERMI experiments, datasets were split into $75\%$ training and $25\%$ testing. However, for Algorithm \ref{algorithm2}, we needed to train the initial classifier $h^*_{\epsilon,\delta}:\mathcal{X} \times \{0,1\} \to \{0,1\}$  followed by a post-processing step. For this, we divided the dataset into three parts: $50\%$ for training the initial classifier, and $25\%$ each for post-processing and testing.

To calculate the statistical parity gap of output classifiers in all experiments, we refer to Definition \ref{statistical_parity_gap}. We need to compute:
\begin{equation*}
    \Delta_{SP}(\hat{h}) := |\mu_0(\hat{Y} = 1) - \mu_1(\hat{Y} = 1)|.
\end{equation*}
In practice, where we do not have access to true distributions $\mu_a$, we use their empirical counterparts. Specifically, we calculate the empirical version of $\mu_a(\hat{Y} = 1)$, i.e., we compute $\hat{\mathbb{P}}\bigl[ \hat{Y} = 1 | A = a \bigr]$. Thus, the empirical version of the statistical parity gap is defined as:
\begin{equation*}
    \widehat{\Delta_{SP}}(\hat{h}) := \Biggl|\hat{\mathbb{P}}\Bigl[ \hat{Y} = 1 | A = 0 \Bigr] - \hat{\mathbb{P}}\Bigl[ \hat{Y} = 1 | A = 1 \Bigr]\Biggr|.
\end{equation*}
In this expression, $\hat{\mathbb{P}}\bigl[ \hat{Y} = 1 | A = 0 \bigr]$ and $\hat{\mathbb{P}}\bigl[ \hat{Y} = 1 | A = 1 \bigr]$ are computed on the test split of the dataset. The statistical parity gap values listed in the column of all tables are derived using the empirical statistical parity gap calculation.


We present the results of our experiments with privacy parameters computed using both the moments accountant and GDP accountant methods. For moments accountant method, we selected the standard deviations of Gaussian noise in DP-SGD to be $3.27$ and $1.56$ for the Adult dataset, and $4.65$ and $2.6$ for the Credit Card dataset. These values were chosen to achieve their respective privacy parameters of $(\eps' = 3, \delta' = 10^{-5})$ and $(\eps' = 9, \delta' = 10^{-5})$. To achieve the target privacy parameters using GDP accountant in Algorithm 2, we set specific noise variances for the Adult and Credit Card datasets. For the Adult dataset, with privacy parameters $(\eps'=3,\delta'=10^{-5})$ and $(\eps'=9,\delta'=10^{-5})$, the standard deviation of the Gaussian noise were set to $3.01$ and $1.45$, respectively. In the case of the Credit Card dataset, under identical privacy parameters, we used a Gaussian noise with a standard deviation of $4.3$ and $2.43$ in DP-SGD. All other parameters, including $\eps_0$, $\eps_1$, learning rate, number of epochs, batch size, and clipping constant, were kept consistent with those used in the experiments where the privacy guarantee was calculated using PRV accountant method (the experiments detailed in Tables \ref{tab:Adult_Eps_3_PRV}, \ref{tab:Adult_Eps_9_PRV}, \ref{tab:Credit_Eps_3_PRV}, and \ref{tab:Credit_Eps_9_PRV}).

The results corresponding to moments accountant method are presented in Tables \ref{tab:Adult_Eps_3}, \ref{tab:Adult_Eps_9}, \ref{tab:Credit_Eps_3}, and \ref{tab:Credit_Eps_9}. Additionally, the results corresponding to GDP accountant method are available in Tables \ref{tab:Adult_Eps_3_GDP}, \ref{tab:Adult_Eps_9_GDP}, \ref{tab:Credit_Eps_3_GDP}, and \ref{tab:Credit_Eps_9_GDP}. Note that the DP-FERMI experiment results remain the same; the only difference lies in the line corresponding to Algorithm \ref{algorithm2}.

It can be concluded that across all these experiments, similar to what was noted in Section \ref{experiments}, Algorithm \ref{algorithm2} consistently offers a substantially better (smaller) statistical parity gap compared to DP-FERMI for fixed privacy and accuracy, regardless of varying privacy parameters, datasets (Adult and Credit Card), and privacy accounting methods employed.

\begin{table}[t]
\centering
\renewcommand{\arraystretch}{1.2} 
\begin{minipage}{0.5\textwidth}
  \centering
  \caption{Adult Dataset ($\eps' = 3, \delta' = 10^{-5}$) - Moments Accountant}
  \label{tab:Adult_Eps_3}
  \begin{tabular}{|
  >{\centering\arraybackslash}m{0.4\columnwidth}|
  >{\centering\arraybackslash}m{0.2\columnwidth}|
  >{\centering\arraybackslash}m{0.2\columnwidth}|}
    \hline
    Method & Accuracy & Statistical Parity Gap \\
    \hline
    Algorithm 2 & 0.7752 & 0.0057\\
    \hline
    DP-FERMI ($\lambda=0.5$) & 0.7998 & 0.1020\\
    \hline
    DP-FERMI ($\lambda=1$) & 0.7859 & 0.0462\\
    \hline
    DP-FERMI ($\lambda=1.5$) & 0.7822 & 0.0267\\
    \hline
    DP-FERMI ($\lambda=1.9$) &  0.7749 & 0.0126\\
    \hline
    DP-FERMI ($\lambda=2.5$) & 0.7673 & 0.0099\\
    \hline
  \end{tabular}
\end{minipage}%
\begin{minipage}{0.5\textwidth}
  \centering
  \caption{Adult Dataset ($\eps' = 9, \delta' = 10^{-5}$) - Moments Accountant}
  \label{tab:Adult_Eps_9}
  \begin{tabular}{|
  >{\centering\arraybackslash}m{0.4\columnwidth}|
  >{\centering\arraybackslash}m{0.2\columnwidth}|
  >{\centering\arraybackslash}m{0.2\columnwidth}|}
    \hline
    Method & Accuracy & Statistical Parity Gap \\
    \hline
    Algorithm 2 & 0.7782 & 0.0054\\
    \hline
    DP-FERMI ($\lambda=0.5$) & 0.8091 & 0.0944\\
    \hline
    DP-FERMI ($\lambda=1$) & 0.7923 & 0.0413\\
    \hline
    DP-FERMI ($\lambda=1.5$) & 0.7810 & 0.0152\\
    \hline
    DP-FERMI ($\lambda=1.7$) & 0.7782 & 0.0121\\
    \hline
    DP-FERMI ($\lambda=2.5$) & 0.7693 & 0.0030\\
    \hline
  \end{tabular}
\end{minipage}
\end{table}

\begin{table}[t]
\centering
\renewcommand{\arraystretch}{1.2} 
\begin{minipage}{0.5\textwidth}
  \centering
  \caption{Credit Card Dataset ($\eps' = 3, \delta' = 10^{-5}$) - Moments Accountant}
  \label{tab:Credit_Eps_3}
  \begin{tabular}{|
  >{\centering\arraybackslash}m{0.4\columnwidth}|
  >{\centering\arraybackslash}m{0.2\columnwidth}|
  >{\centering\arraybackslash}m{0.2\columnwidth}|}
    \hline
    Method & Accuracy & Statistical Parity Gap \\
    \hline
    Algorithm 2 & 0.7842 & 0.0041\\
    \hline
    DP-FERMI ($\lambda=0.1$) & 0.7899 & 0.0212\\
    \hline
    DP-FERMI ($\lambda=0.2$) & 0.7846 & 0.0193\\
    \hline
    DP-FERMI ($\lambda=0.5$) & 0.7777 & 0.0185\\
    \hline
    DP-FERMI ($\lambda=1$) & 0.7759 & 0.0105\\
    \hline
    DP-FERMI ($\lambda=2.5$) & 0.7669 & 0.0110\\
    \hline
  \end{tabular}
\end{minipage}%
\begin{minipage}{0.5\textwidth}
  \centering
  \caption{Credit Card Dataset ($\eps' = 9, \delta' = 10^{-5}$) - Moments Accountant}
  \label{tab:Credit_Eps_9}
  \begin{tabular}{|
  >{\centering\arraybackslash}m{0.4\columnwidth}|
  >{\centering\arraybackslash}m{0.2\columnwidth}|
  >{\centering\arraybackslash}m{0.2\columnwidth}|}
    \hline
    Method & Accuracy & Statistical Parity Gap \\
    \hline
    Algorithm 2 & 0.7908 & 0.0071\\
    \hline
    DP-FERMI ($\lambda=0.25$) & 0.7996 & 0.0188\\
    \hline
    DP-FERMI ($\lambda=0.35$) & 0.7950 & 0.0174\\
    \hline
    DP-FERMI ($\lambda=0.5$) & 0.7912 & 0.0172\\
    \hline
    DP-FERMI ($\lambda=1$) & 0.7895 & 0.0105\\
    \hline
    DP-FERMI ($\lambda=2.5$) & 0.7884 & 0.0066\\
    \hline
  \end{tabular}
\end{minipage}
\end{table}

\begin{table}[H]
\centering
\renewcommand{\arraystretch}{1.2} 
\begin{minipage}{0.5\textwidth}
  \centering
  \caption{Adult Dataset ($\eps' = 3, \delta' = 10^{-5})$ - GDP Accountant}
  \label{tab:Adult_Eps_3_GDP}
  \begin{tabular}{|
  >{\centering\arraybackslash}m{0.4\columnwidth}|
  >{\centering\arraybackslash}m{0.2\columnwidth}|
  >{\centering\arraybackslash}m{0.2\columnwidth}|}
    \hline
    Method & Accuracy & Statistical Parity Gap \\
    \hline
    Algorithm 2 & 0.7796 & 0.0081\\
    \hline
    DP-FERMI ($\lambda=0.5$) & 0.7998 & 0.1020\\
    \hline
    DP-FERMI ($\lambda=1$) & 0.7859 & 0.0462\\
    \hline
    DP-FERMI ($\lambda=1.5$) & 0.7822 & 0.0267\\
    \hline
    DP-FERMI ($\lambda=1.7$) &  0.7795 & 0.0215\\
    \hline
    DP-FERMI ($\lambda=2.5$) & 0.7673 & 0.0099\\
    \hline
  \end{tabular}
\end{minipage}%
\begin{minipage}{0.5\textwidth}
  \centering
  \caption{Adult Dataset ($\eps' = 9, \delta' = 10^{-5})$ - GDP Accountant}
  \label{tab:Adult_Eps_9_GDP}
  \begin{tabular}{|
  >{\centering\arraybackslash}m{0.4\columnwidth}|
  >{\centering\arraybackslash}m{0.2\columnwidth}|
  >{\centering\arraybackslash}m{0.2\columnwidth}|}
    \hline
    Method & Accuracy & Statistical Parity Gap \\
    \hline
    Algorithm 2 & 0.7804& 0.0060\\
    \hline
    DP-FERMI ($\lambda=0.5$) & 0.8091 & 0.0944\\
    \hline
    DP-FERMI ($\lambda=1$) & 0.7923 & 0.0413\\
    \hline
    DP-FERMI ($\lambda=1.5$) & 0.7810 & 0.0152\\
    \hline
    DP-FERMI ($\lambda=1.8$) & 0.7769 & 0.0113\\
    \hline
    DP-FERMI ($\lambda=2.5$) & 0.7693 & 0.0030\\
    \hline
  \end{tabular}
\end{minipage}
\end{table}

\begin{table}[H]
\centering
\renewcommand{\arraystretch}{1.2} 
\begin{minipage}{0.5\textwidth}
  \centering
  \caption{Credit Card Dataset ($\eps' = 3, \delta' = 10^{-5})$ -  GDP Accountant}
  \label{tab:Credit_Eps_3_GDP}
  \begin{tabular}{|
  >{\centering\arraybackslash}m{0.4\columnwidth}|
  >{\centering\arraybackslash}m{0.2\columnwidth}|
  >{\centering\arraybackslash}m{0.2\columnwidth}|}
    \hline
    Method & Accuracy & Statistical Parity Gap \\
    \hline
    Algorithm 2 & 0.7863 & 0.0062\\
    \hline
    DP-FERMI ($\lambda=0.1$)  & 0.7899 & 0.0212\\
    \hline
    DP-FERMI ($\lambda=0.17$)  & 0.7860  & 0.0199\\
    \hline
    DP-FERMI ($\lambda=0.5$)  & 0.7777 & 0.0185\\
    \hline
    DP-FERMI ($\lambda=1$) & 0.7759  & 0.0105\\
    \hline
    DP-FERMI ($\lambda=2.5$) & 0.7669 & 0.0110 \\
    \hline
  \end{tabular}
\end{minipage}%
\begin{minipage}{0.5\textwidth}
  \centering
  \caption{Credit Card Dataset ($\eps' = 9, \delta' = 10^{-5})$ - GDP Accountant}
  \label{tab:Credit_Eps_9_GDP}
  \begin{tabular}{|
  >{\centering\arraybackslash}m{0.4\columnwidth}|
  >{\centering\arraybackslash}m{0.2\columnwidth}|
  >{\centering\arraybackslash}m{0.2\columnwidth}|}
    \hline
    Method & Accuracy & Statistical Parity Gap \\
    \hline
    Algorithm 2 & 0.7915 & 0.0063\\
    \hline
    DP-FERMI ($\lambda=0.25$)  & 0.7996 & 0.0188 \\
    \hline
    DP-FERMI ($\lambda=0.35$) & 0.7950 & 0.0174\\
    \hline
    DP-FERMI ($\lambda=0.5$) & 0.7912 & 0.0172 \\
    \hline
    DP-FERMI ($\lambda=1$) & 0.7895 & 0.0105\\
    \hline
    DP-FERMI ($\lambda=2.5$) & 0.7884 & 0.0066\\
    \hline
  \end{tabular}
\end{minipage}
\end{table}

\end{document}